\newcommand{\ocorl}{\hyperref[algorithm:EpisodicCTRL]{\textsc{OCoRL}}\xspace}
\newtheorem{theorem}{Theorem}
\newtheorem{lemma}[theorem]{Lemma}
\newtheorem{definition}{Definition}
\newtheorem{assumption}{Assumption}
\newtheorem{proposition}[theorem]{Proposition}
\newtheorem{corollary}[theorem]{Corollary}
\newtheorem{fact}[theorem]{Fact}
\DeclarePairedDelimiter\parentheses{(}{)}
\newcommand{\likelihood}[2][]{%
    \ifthenelse{\isempty{#1}}
        {p\left(#2\right)}
        {p_{#1}\left(#2\right)}%
}
\newcommand{\normal}[3][]{%
    \ifthenelse{\isempty{#1}}
        {\mathcal{N}\left(#2, #3\right)}
        {\mathcal{N}\left(#1|#2, #3\right)}%
}
\newcommand{\defeq}{\overset{\text{def}}{=}}
\newcommand{\norm}[1]{\left\lVert#1\right\rVert}
\newcommand{\abs}[1]{\left\lvert#1\right\rvert}
\newcommand{\set}[1]{\left\{#1\right\}}
\newcommand{\ceil}[1]{\left\lceil #1 \right\rceil}
\DeclareMathOperator*{\argmin}{arg\!\min}
\DeclareMathOperator*{\argmax}{arg\!\max}
\DeclareMathOperator{\diag}{diag}
\def\complexity{{\mathcal{I}}}
\RenewDocumentCommand{\H}{mo}{\mathrm{H}\IfValueTF{#2}{\left[#1\ \middle|\ #2\right]}{\parentheses*{#1}}}
\NewDocumentCommand{\Hsm}{mo}{\mathrm{H}\IfValueTF{#2}{[#1 \mid #2]}{\parentheses{#1}}}
\NewDocumentCommand{\I}{mmo}{\mathrm{I}\IfValueTF{#3}{\!\left(#1;#2\ \middle|\ #3\right)}{\parentheses*{#1; #2}}}
\NewDocumentCommand{\Ism}{mmo}{\mathrm{I}\IfValueTF{#3}{(#1;#2 \mid #3)}{\parentheses{#1; #2}}}
\newcommand{\R}{\mathbb{R}}
\newcommand{\Rzero}{\mathbb{R}_{\geq 0}}
\newcommand{\N}{\mathbb{N}}
\def\mI{{\bm{I}}}
\def\mK{{\bm{K}}}
\def\mQ{{\bm{Q}}}
\def\mR{{\bm{R}}}
\def\mSigma{{\bm{\Sigma}}}
\def\veps{{\bm{\epsilon}}}
\def\vzero{{\bm{0}}}
\def\vmu{{\bm{\mu}}}
\def\veta{{\bm{\eta}}}
\def\vf{{\bm{f}}}
\def\vh{{\bm{h}}}
\def\vk{{\bm{k}}}
\def\vu{{\bm{u}}}
\def\vx{{\bm{x}}}
\def\vxp{{\bm{x'}}}
\def\vy{{\bm{y}}}
\def\vz{{\bm{z}}}
\def\vpi{{\bm{\pi}}}
\def\vmu{{\bm{\mu}}}
\def\vsigma{{\bm{\sigma}}}
\def\setD{{\mathcal{D}}}
\def\setF{{\mathcal{F}}}
\def\setH{{\mathcal{H}}}
\def\setM{{\mathcal{M}}}
\def\setU{{\mathcal{U}}}
\def\setX{{\mathcal{X}}}
\def\setY{{\mathcal{Y}}}
\def\setZ{{\mathcal{Z}}}
\title{Efficient Exploration in Continuous-time \\ Model-based Reinforcement Learning}
\author{%
  Lenart Treven \\
  ETH Zürich\\
  \texttt{trevenl@ethz.ch} \\
    \And
    Jonas Hübotter \\
  ETH Zürich\\
  \texttt{jhuebotter@student.ethz.ch} \\
    \And
  Bhavya Sukhija   \\
  ETH Zürich\\
  \texttt{sukhijab@ethz.ch} \\
  \AND
  Florian Dörfler\\
ETH Z\"{u}rich\\
\texttt{dorfler@ethz.ch}\\
\And
Andreas Krause\\
ETH Z\"{u}rich\\
\texttt{krausea@ethz.ch}\\
}
\begin{document}

\etocdepthtag.toc{mtchapter}
\etocsettagdepth{mtchapter}{subsection}
\etocsettagdepth{mtappendix}{none}

\maketitle

\begin{abstract}
\looseness=-1
    Reinforcement learning algorithms typically consider discrete-time dynamics, even though the underlying systems are often continuous in time. In this paper, we introduce a model-based reinforcement learning algorithm that represents continuous-time dynamics using nonlinear ordinary differential equations (ODEs). We capture epistemic uncertainty using well-calibrated probabilistic models, and use the optimistic principle for exploration.
    Our regret bounds surface the importance of the {\em measurement selection strategy} (MSS), since in continuous time we not only must decide how to explore, but also {\em when} to observe the underlying system.
    Our analysis demonstrates that the regret is sublinear when modeling ODEs with Gaussian Processes (GP) for common choices of MSS, such as equidistant sampling.
    Additionally, we propose an \emph{adaptive}, data-dependent, practical MSS that, when combined with GP dynamics, also achieves sublinear regret with significantly fewer samples.
    We showcase the benefits of continuous-time modeling over its discrete-time counterpart, as well as our proposed \emph{adaptive} MSS over standard baselines, on several applications.
\end{abstract}

\section{Introduction}
\label{section: Introduction}
\looseness-1
Real-world systems encountered in natural sciences and engineering applications, such as robotics \citep{spong2006robot}, biology \citep{lenhart2007optimal,jones2009differential}, medicine \citep{panetta2003optimal}, etc., are fundamentally continuous in time.
Therefore, ordinary differential equations (ODEs) are the natural modeling language.
However, the reinforcement learning (RL) community predominantly models problems in discrete time, with a few notable exceptions \citep{doya2000reinforcement,yildiz2021continuous,lutter2021value}.
The discretization of continuous-time systems imposes limitations on the application of state-of-the-art RL algorithms, as they are tied to specific discretization schemes.

\looseness-1
Discretization of continuous-time systems sacrifices several essential properties that could be preserved in a continuous-time framework \citep{nesic2021nonlinear}.
For instance, exact discretization is only feasible for linear systems, leading to an inherent discrepancy when using standard discretization techniques for nonlinear systems.
Additionally, discretization obscures the inter-sample behavior of the system, changes stability properties, may result in uncontrollable systems, or requires an excessively high sampling rate.
Multi-time-scale systems are particularly vulnerable to these issues \citep{engquist2007heterogeneous}.
In many practical scenarios, the constraints imposed by discrete-time modeling are undesirable.
Discrete-time models do not allow for the independent adjustment of measurement and control frequencies, which is crucial for real-world systems that operate in different regimes.
For example, in autonomous driving, low-frequency sensor sampling and control actuation may suffice at slow speeds, while high-speed driving demands faster control. How to choose aperiodic measurements and control is studied in the literature on the event and self-triggered control \citep{astrom2002comparison,anta2010sample,heemels2012introduction}.
They show that the number of control inputs can be significantly reduced by using aperiodic control.
Moreover, data from multiple sources is often collected at varying frequencies \citep{ghysels2006predicting} and is often not even equidistant in time.
Discrete-time models struggle to exploit this data for learning, while continuous-time models can naturally accommodate it.

\looseness-1
Continuous-time modeling also offers the flexibility to determine optimal measurement times based on need, in contrast to the fixed measurement frequency in discrete-time settings, which can easily miss informative samples.
This advantage is particularly relevant in fields like medicine, where patient monitoring often requires higher frequency measurements during the onset of illness and lower frequency measurements as the patient recovers \citep{kaandorp2007optimal}.
At the same time, in fields such as medicine, measurements are often costly, and hence, it is important that the most informative measurements are selected.
Discrete-time models, limited to equidistant measurements, therefore often result in suboptimal decision-making and their sample efficiency is fundamentally limited.
In summary, continuous-time modeling is agnostic to the choice of measurement selection strategy (MSS), whereas discrete-time modeling often only works for an equidistant MSS.

\paragraph{Contributions}
\looseness=-1
Given the advantages of continuous-time modeling, in this work, we propose an optimistic continuous-time model-based RL algorithm -- \ocorl.
Moreover, we theoretically analyze \ocorl and show a general regret bound that holds for any MSS.
We further show that for common choices of MSSs, such as equidistant sampling, the regret is sublinear when we model the dynamics with GPs \citep{williams2006gaussian}.
To our knowledge, we are the first to give a no-regret algorithm for a rich class of nonlinear dynamical systems in the continuous-time RL setting.
We further propose an \emph{adaptive} MSS that is practical, data-dependent, and requires considerably fewer measurements compared to the equidistant MSS while still ensuring the sublinear regret for the GP case.
Crucial to \ocorl is the exploration induced by the \emph{optimism in the face of uncertainty} principle for model-based RL, which is commonly employed in the discrete-time realm \citep{auer2008near,curi2020efficient}.
We validate the performance of \ocorl on several robotic tasks, where we clearly showcase the advantages of continuous-time modeling over its discrete-time counterpart.
Finally, we provide an efficient implementation\footnote{\url{https://github.com/lenarttreven/ocorl}} of \ocorl in JAX \citep{jax2018github}.

\section{Problem setting }
\label{section: Problem setting}

In this work, we study a continuous-time deterministic dynamical system $\vf^{*}$ with initial state $\vx_0 \in \setX \subset \R^{d_x}$, i.e.,
\begin{align*}
    \vx(t) = \vx_0 + \int_{0}^t\vf^*\left(\vx(s), \vu(s)\right) \,ds.
\end{align*}
\looseness=-1
Here $\vu : [0, \infty) \to \R^{d_u}$ is the input we apply to the system. Moreover, we consider state feedback controllers represented through a policy $\vpi: \setX \to \setU \subset \R^{d_u}$, that is, $\vu(s) = \vpi(\vx(s))$. Our objective is to find the optimal policy with respect to a given running cost function $c: \R^{d_x} \times \R^{d_u} \to \R$. Specifically, we are interested in solving the following optimal control (OC) problem over the policy space $\Pi$:
\begin{equation}
\begin{aligned}
    \label{eq:optimal_control_on_true_system}
    \vpi^* &\defeq \argmin_{\vpi \in \Pi} C(\vpi, \vf^*) = \argmin_{\vpi \in \Pi} \int_{0}^Tc(\vx, \vpi(\vx)) \,dt \\
    & \text{s.t.} \quad \dot{\vx} = \vf^{*}(\vx, \vpi(\vx)), \quad \vx(0) = \vx_0.
\end{aligned}
\end{equation}

The function $\vf^*$ is \emph{unknown}, but we can collect data over episodes $n=1\ldots, N$ and learn about the system by deploying a policy $\vpi_n \in \Pi$ for the horizon of $T$ in episode $n$.
In an (overly) idealized continuous time setting, one can measure the system at any time step. However, we consider a more practical setting, where we assume taking measurements is costly, and therefore want as few measurements as necessary.
To this end, we formally define a measurement selection strategy below.
\begin{definition}[Measurement selection strategy]
    A measurement selection strategy $S$ is a sequence of sets $(S_n)_{n\ge1}$, such that $S_n$ contains $m_n$ points at which we take measurements, i.e., $S_n~\subset~[0, T], \abs{S_n} = m_n$.\footnote{Here, the set $S_n$ may depend on observations prior to episode $n$ or is even constructed while we execute the trajectory. For ease of notation, we do not make this dependence explicit.}
\end{definition}
During episode $n$, given a policy $\vpi_n$ and a MSS $S_n$, we collect a dataset $\setD_n \sim (\vpi_n, S_n)$. The dataset is defined as 
\begin{align*}
    \setD_n &\defeq \set{(\vz_n(t_{n, i}), \dot{\vy}_n(t_{n, i})) \mid t_{n, i} \in S_n, i \in \set{1\ldots,m_n}} \qquad\text{where} \\
    \vz_n(t_{n, i}) &\defeq (\vx_n(t_{n, i}), \vpi_n(\vx_n(t_{n, i}))), \quad \dot{\vy}_n(t_{n, i}) \defeq \dot{\vx}_n(t_{n, i}) + \veps_{n, i}.
\end{align*}
\looseness-1
Here $\vx_n(t), \dot{\vx}_n(t)$ are state and state derivative in episode $n$, and $\veps_{n, i}$ is i.i.d, $\sigma$-sub-Gaussian noise of the state derivative observations.
Note, even though in practice only the state $\vx(t)$ might be observable, one can estimate its derivative $\dot{\vx}(t)$ (e.g., using finite differences, interpolation methods, etc.~\citet{cullum1971numerical,knowles1995variational,chartrand2011numerical,knowles2014methods,wagner2018regularised,treven2021distributional}). We capture  the noise in our measurements and/or estimation of  $\dot{\vx}(t)$ with $\veps_{i, n}$.

In summary, at each episode $n$, we deploy a policy $\vpi_n$ for a horizon of $T$, observe the system according to a proposed MSS $S_n$, and learn the dynamics $\vf^*$.
By deploying $\vpi_n$ instead of the optimal policy $\vpi^*$, we incur a regret,
\begin{equation*}
    r_n(S) \defeq C(\vpi_n, \vf^*) -  C(\vpi^*, \vf^*).
\end{equation*}
\looseness-1
Note that the policy $\vpi_n$ depends on the data $\setD_{1:{n-1}} = \cup_{i<n}\setD_i$ and hence implicitly on the MSS $S$.

\paragraph{Performance measure}
\label{paragraph: Performance measure}
\looseness-1
We analyze \ocorl by comparing it with the performance of the best policy $\vpi^*$ from the class $\Pi$.
We evaluate the \emph{cumulative regret} $R_N(S) \defeq \sum_{n=1}^Nr_n(S)$ that sums the gaps between the performance of the policy $\vpi_n$ and the optimal policy $\vpi^*$ over all the episodes.
If the regret $R_N(S)$ is sublinear in $N$, then the average cost of the policy $C(\vpi_n, \vf^*)$ converges to the optimal cost $C(\vpi^*, \vf^*)$.

\subsection{Assumptions}
\label{subsection:Assumptions}
\looseness-1
Any meaningful analysis of cumulative regret for continuous time, state, and action spaces requires some assumptions on the system and the policy class.
We make some continuity assumptions, similar to the discrete-time case \citep{khalil2015nonlinear,curi2020efficient,sussex2022model}, on the dynamics, policy, and cost.
\begin{assumption}[Lipschitz continuity]
    Given any norm $\norm{\cdot}$, we assume that the system dynamics $\vf^*$ and cost $c$ are $L_{\vf}$ and $L_c$-Lipschitz continuous, respectively, with respect to the induced metric.
    Moreover, we define $\Pi$ to be the policy class of $L_{\vpi}$-Lipschitz continuous policy functions and $\setF$ a class of $L_{\vf}$ Lipschitz continuous dynamics functions with respect to the induced metric.
\end{assumption}
We learn a model of $\vf^*$ using data collected from the episodes. For a given state-action pair $\vz = (\vx, \vu)$, our learned model predicts a mean estimate $\vmu_n(\vz)$ and quantifies our epistemic uncertainty $\vsigma_n(\vz)$ about the function $\vf^*$.
\begin{definition}[Well-calibrated statistical model of $\vf^*$, \cite{rothfuss2023hallucinated}]
\label{definition: well-calibrated model}
    Let $\setZ \defeq \setX \times \setU$.
    An all-time well-calibrated statistical model of the function $\vf^*$ is a sequence $\set{\setM_{n}(\delta)}_{n \ge 0}$, where
    \begin{align*}
        \setM_n(\delta) \defeq \set{\vf: \setZ \to \R^{d_x} \mid \forall \vz \in \setZ, \forall j \in \set{1, \ldots, d_x}: \abs{\mu_{n, j}(\vz) - f_j(\vz)} \le \beta_n(\delta) \sigma_{n, j}(\vz)},
    \end{align*}
    if, with probability at least $1-\delta$, we have $\vf^* \in \bigcap_{n \ge 0}\setM_n(\delta)$.
    Here, $\mu_{n, j}$ and $\sigma_{n, j}$ denote the $j$-th element in the vector-valued mean and standard deviation functions $\vmu_n$ and $\vsigma_n$ respectively, and $\beta_n(\delta) \in \Rzero$ is a scalar function that depends on the confidence level $\delta \in (0, 1]$ and which is monotonically increasing in $n$.
\end{definition}
\begin{assumption}[Well-calibration]
\label{assumption: Well Calibration Assumption}
    We assume that our learned model is an all-time well-calibrated statistical model of $\vf^*$. We further assume that the standard deviation functions $(\vsigma_n(\cdot))_{n \ge 0}$ are $L_{\vsigma}$-Lipschitz continuous.
\end{assumption}
This is a natural assumption, which states that we are with high probability able to capture the dynamics within a confidence set spanned by our predicted mean and epistemic uncertainty.
For example, GP models are all-time well-calibrated for a rich class of functions (c.f., \cref{subsection: Modeling dynamics with a Gaussian Process}) and also satisfy the Lipschitz continuity assumption on $(\vsigma_n(\cdot))_{n \ge 0}$~\citep{rothfuss2023hallucinated}.
For Bayesian neural networks, obtaining accurate uncertainty estimates is still an open and active research problem.
However, in practice, re-calibration techniques \citep{kuleshov2018accurate} can be used.

By leveraging these assumptions, in the next section, we propose our algorithm \ocorl and derive a generic bound on its cumulative regret. Furthermore,
we show that \ocorl provides sublinear cumulative regret for the case when GPs are used to learn $\vf^*$.

\section{Optimistic Continuous-time RL Algorithm}
\label{section: Continuous-time RL Algorithm}

\begin{algorithm}[H]
    \caption*{\textbf{OCoRL: } \textsc{Optimistic Continuous-time RL}}
    \label{algorithm:EpisodicCTRL}
    \begin{algorithmic}[]
        \STATE {\textbf{Init:}}{ Statistical model $\setM_0$, Simulator $\textsc{Sim}$, MSS $S$, Probability $\delta$}
        \FOR{episode $n=1, \dots, N $}{
            \STATE {
            \begin{align*}
                &\vpi_n = \argmin_{\vpi \in \Pi} \min_{\vf \in \setM_{n-1} \cap \setF} C(\vpi, \vf) \quad &&\text{\color{blue} /* Select optimistic policy /*} \\
                &\mathcal{D}_n = \set{(\vz_n(t_{n, i}), \dot{\vy}_n(t_{n,i})) \mid t_{n, i} \in S_n} \leftarrow \textsc{Sim}(\vpi_{n}, S_n) \quad &&\text{\color{blue} /* Measurement collection /*} \\
                 &\setM_n \leftarrow	(\vmu_n, \vsigma_n, \beta_n(\delta)) \leftarrow \setD_{1:n} \quad &&\text{\color{blue} /* Update statistical model/*}
            \end{align*}
            }}
        \ENDFOR
    \end{algorithmic}
\end{algorithm}

\paragraph{Optimistic policy selection} There are several strategies we can deploy to trade-off exploration and exploitation, e.g., dithering ($\varepsilon$-greedy, Boltzmann exploration \citep{sutton2018reinforcement}), Thompson sampling \citep{osband2013more}, upper-confidence RL (UCRL) \citep{auer2008near}, etc. \ocorl is a continuous-time variant of the Hallucinated UCRL strategy introduced by \citet{chowdhury2017kernelized,curi2020efficient}.
In episode $n$, the optimistic policy is obtained by solving the optimal control problem:
\begin{align}
\label{eq:optimal_control}
    (\vpi_n, \vf_n) &\defeq \argmin_{\vpi \in \Pi,\; \vf \in \setM_{n-1} \cap \setF} C(\vpi, \vf)
\end{align}
\looseness=-1
Here, $\vf_n$ is a dynamical system such that the cost by controlling $\vf_n$ with its optimal policy $\vpi_n$ is the lowest among all the plausible systems from $\setM_{n-1} \cap \setF$.
The optimal control problem \eqref{eq:optimal_control} is infinite-dimensional, in general nonlinear, and thus hard to solve. For the analysis, we assume we can perfectly solve it.
In \cref{appendix: Solving Optimistic Optimal Control Problem}, we present details on how we solve it in practice.
Specifically, in \cref{appendix: Application to Arbitrary Discretizations}, we show that our results seamlessly extend to the setting where the optimal control problem of \cref{eq:optimal_control_on_true_system} is
discretized w.r.t.~an \emph{arbitrary} choice of discretization.
Moreover, in this setting, we show that theoretical guarantees can be derived without restricting the models to $\vf \in \setM_{n-1} \cap \setF$. Instead, a practical optimization over all models in $\setM_{n-1}$ can be performed as in~\cite{curi2020efficient, pasztor2021efficient}.
\paragraph{Model complexity} We expect that the regret of any model-based continuous-time RL algorithm depends both on the hardness of learning the underlying true dynamics model $\vf^*$ and the MSS. To capture both, we define the following model complexity:
\begin{align}
    \complexity_N(\vf^*, S) \defeq \max_{\substack{\vpi_1, \ldots, \vpi_N \\ \vpi_n \in \Pi}} \sum_{n=1}^N\int_0^T\norm{\vsigma_{n-1}\left(\vz_n(t)\right)}^2 \,dt.
\end{align}
\looseness-1
The model complexity measure captures the hardness of learning the dynamics $\vf^*$. Intuitively, for a given $N$, the more complicated the dynamics $\vf^*$, the larger the epistemic uncertainty and thereby the model complexity. For the discrete-time setting, the integral is replaced by the sum over the uncertainties on the trajectories (c.f., Equation (8) of \citet{curi2020efficient}).
In the continuous-time setting, we do not observe the state at every time step, but only at a finite number of times wherever the MSS $S$ proposes to measure the system.
Accordingly, $S$ influences how we collect data and update our calibrated model. Therefore, the model complexity depends on $S$.
Next, we first present the regret bound for general MSSs, then we look at particular strategies for which we can show convergence of the regret.
\begin{proposition}
\label{proposition: general regret bound}
    Let $S$ be any MSS. If we run \ocorl, we have with probability at least $1-\delta$,
    \begin{align}
    \label{eq:general_regret_bound}
        R_N(S) \le 2 \beta_{N} L_c (1 + L_{\vpi}) T^\frac{3}{2} e^{L_\vf (1 + L_{\vpi}) T} \sqrt{N\complexity_N(\vf^*, S)}.
    \end{align}
\end{proposition}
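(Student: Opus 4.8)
The plan is to follow the standard optimism-based regret decomposition, adapted to continuous time, and then control the resulting trajectory mismatch with Grönwall's inequality. First I would condition on the high-probability event that $\vf^* \in \bigcap_{n\ge 0}\setM_n(\delta)$, which holds with probability at least $1-\delta$ by \cref{assumption: Well Calibration Assumption}. On this event the pair $(\vpi^*, \vf^*)$ is feasible for the optimistic problem \eqref{eq:optimal_control} in every episode, so its minimizer $(\vpi_n, \vf_n)$ satisfies $C(\vpi_n, \vf_n) \le C(\vpi^*, \vf^*)$. This lets me bound the per-episode regret by a difference in which only the dynamics change:
\[
    r_n(S) = C(\vpi_n, \vf^*) - C(\vpi^*, \vf^*) \le C(\vpi_n, \vf^*) - C(\vpi_n, \vf_n).
\]

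Next I would translate this cost gap into a state-trajectory gap. Let $\vx_n$ be the true trajectory rolled out under $\vpi_n$ and $\vf^*$ (the one actually observed), and let $\tilde{\vx}_n$ be the hallucinated trajectory under $\vpi_n$ and $\vf_n$, both starting at $\vx_0$; write $\vz_n = (\vx_n, \vpi_n(\vx_n))$ and $\tilde{\vz}_n = (\tilde{\vx}_n, \vpi_n(\tilde{\vx}_n))$ for the corresponding state-action pairs. Using $L_c$-Lipschitzness of $c$ together with $L_{\vpi}$-Lipschitzness of $\vpi_n$ on the joint state-action pair gives $r_n(S) \le L_c(1+L_{\vpi})\int_0^T \norm{\vx_n(t) - \tilde{\vx}_n(t)}\,dt$, which explains the $L_c(1+L_{\vpi})$ prefactor. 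To bound the pointwise difference $d(t) \defeq \norm{\vx_n(t) - \tilde{\vx}_n(t)}$, I would write both states in integral form and split the integrand as $\norm{\vf^*(\vz_n) - \vf_n(\tilde{\vz}_n)} \le \norm{\vf^*(\vz_n) - \vf_n(\vz_n)} + \norm{\vf_n(\vz_n) - \vf_n(\tilde{\vz}_n)}$.

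The first term is controlled by calibration: since both $\vf^*$ and $\vf_n$ lie in $\setM_{n-1}(\delta)$, the triangle inequality applied to the confidence bounds of \cref{definition: well-calibrated model} yields $\norm{\vf^*(\vz_n(s)) - \vf_n(\vz_n(s))} \le 2\beta_{n-1}\norm{\vsigma_{n-1}(\vz_n(s))}$ in the Euclidean norm, evaluated at the \emph{true} trajectory point $\vz_n(s)$ so as to match the definition of $\complexity_N$. The second term is at most $L_{\vf}(1+L_{\vpi})\,d(s)$ since $\vf_n \in \setF$ is $L_{\vf}$-Lipschitz. Feeding these into $d(t) \le \int_0^t 2\beta_{n-1}\norm{\vsigma_{n-1}(\vz_n(s))}\,ds + L_{\vf}(1+L_{\vpi})\int_0^t d(s)\,ds$ and applying Grönwall's inequality — the forcing term is non-decreasing, so the exponential factors out — gives $d(t) \le 2\beta_{n-1}\, e^{L_{\vf}(1+L_{\vpi})T}\int_0^T \norm{\vsigma_{n-1}(\vz_n(s))}\,ds$, producing the factor $e^{L_{\vf}(1+L_{\vpi})T}$.

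Finally I would assemble the bound. Integrating $d$ over $[0,T]$ contributes one factor of $T$; then bounding $\beta_{n-1}\le\beta_N$ by monotonicity, summing over $n$, and applying Cauchy-Schwarz twice — once over the time integral to pass from $\int\norm{\vsigma}$ to $\sqrt{T}\sqrt{\int\norm{\vsigma}^2}$, and once over the episode sum to extract $\sqrt{N}$ — converts $\sum_n\int_0^T\norm{\vsigma_{n-1}(\vz_n(s))}\,ds$ into $\sqrt{TN}\,\sqrt{\sum_n\int_0^T\norm{\vsigma_{n-1}(\vz_n(s))}^2\,ds} \le \sqrt{TN\,\complexity_N(\vf^*,S)}$ by the definition of $\complexity_N$ as a maximum over policy sequences. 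Multiplying the prefactors $T$ (from integrating $d$) and $\sqrt{T}$ (from Cauchy-Schwarz) yields the $T^{3/2}$ term and recovers the stated bound. I expect the main obstacle to be the Grönwall step: one must set up the integral inequality so that the calibration forcing term is evaluated along the true trajectory (so it agrees with $\complexity_N$) while the Lipschitz feedback term carries the full state discrepancy, and verify that coupling state and action through $\vpi_n$ only costs the factor $(1+L_{\vpi})$ rather than introducing extra dependence.
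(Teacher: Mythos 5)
Your proposal is correct and follows essentially the same route as the paper: the optimism inequality $C(\vpi_n,\vf_n)\le C(\vpi^*,\vf^*)$, the Lipschitz reduction of the cost gap to $\int_0^T\norm{\vx_n-\widehat{\vx}_n}\,dt$, the same splitting of $\norm{\vf^*(\vz_n)-\vf_n(\widehat{\vz}_n)}$ into a calibration term at the true trajectory plus a Lipschitz feedback term, Grönwall, and the double Cauchy--Schwarz to produce $T^{3/2}\sqrt{N\complexity_N}$. The only differences are cosmetic (e.g., using $\beta_{n-1}$ rather than $\beta_n$ before bounding by $\beta_N$), so there is nothing to correct.
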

We provide the proof of \cref{proposition: general regret bound} in \cref{appendix: Theory}.
Because we have access only to the statistical model and any errors in dynamics compound (continuously in time) over the episode, the regret $R_N(S)$ depends exponentially on the horizon $T$.
This is in line with the prior work in the discrete-time setting~\citep{curi2020efficient}.
If the model complexity term $\complexity_N(\vf^*, S)$ and $\beta_{N}$ grow at a rate slower than $N$, the regret is sublinear and the average performance of \ocorl converges to $C(\vpi^*, \vf^*)$.
In our analysis, the key step to show sublinear regret for the GP dynamics model is to upper bound the integral of uncertainty in the model complexity with the sum of uncertainties at the points where we collect the measurements.
In the next section, we show how this can be done for different measurement selection strategies.

\subsection{Measurement selection strategies (MSS)}
\label{subsection: Measurement selection strategies}

In the following, we present different natural MSSs and compare the number of measurements they propose per episode.
Formal derivations are included in \cref{subsection: Bounding Model Complexities}.

\paragraph{Oracle}\label{paragraph: Oracle}\looseness-1 Intuitively, if we take measurements at the times when we are the most uncertain about dynamics on the executed trajectory, i.e., when $\norm{\vsigma_{n-1}\left(\vz_n(t)\right)}$ is largest, we gain the most knowledge about the true function $\vf^*$.
Indeed, when the statistical model is a GP and noise is homoscedastic and Gaussian, observing the most uncertain point on the trajectory leads to the maximal reduction in entropy of $\vf^*(\vz_n(t))$ (c.f., Lemma 5.3 of \cite{srinivas2009gaussian}).
In the ideal case, we can define an MSS that collects only the point with the highest uncertainty in every episode, i.e., $S_n^{\text{\tiny{ORA}}} \defeq \set{t_{n, 1}}$ where $t_{n, 1} \defeq \argmax_{0\le t \le T}\norm{\vsigma_{n-1}\left(\vz_n(t)\right)}^2$.
For this MSS we bound the integral over the horizon $T$ with the maximal value of the integrand times the horizon $T$:
\begin{align}
    \complexity_N(\vf^*, S_n^{\text{\tiny{ORA}}}) \le \max_{\substack{\vpi_1, \ldots, \vpi_N \\ \vpi_n \in \Pi}} T \sum_{n=1}^N\norm{\vsigma_{n-1}\left(\vz_n(t_{n, 1})\right)}^2
\end{align}
\looseness-1
The \emph{oracle} MSS collects only one measurement per episode, however, it is impractical since it requires knowing the most uncertain point on the true trajectory a priori, i.e., before executing the policy.
\paragraph{Equidistant}\label{paragraph: Equidistant}
\looseness-1
Another natural MSS is the equidistant MSS. We collect $m_n$ equidistant measurements in episode $n$ and upper bound the integral with the upper Darboux integral.
\begin{align}
\label{eq:model_complexitu_upper_bound_equidistant}
    \complexity_N(\vf^*, S_n^{\text{\tiny{EQI}}}) \le \max_{\substack{\vpi_1, \ldots, \vpi_N \\ \vpi_n \in \Pi}} \sum_{n=1}^N\frac{T}{m_n}\sum_{i=1}^{m_n}\left(\norm{\vsigma_{n-1}\left(\vz_n(t_{n, i})\right)}^2 +  \frac{TL_{\vsigma^2}}{m_n}\right).
\end{align}
\looseness-1
Here $L_{\vsigma^2}$ is the Lipschitz constant of $\norm{\vsigma_{n-1}(\cdot)}^2$.
To achieve sublinear regret, we require that $\sum_{n=1}^N \frac{1}{m_n} \in o(N)$.
Therefore, for  a fixed equidistant MSS, our analysis does not ensure a sublinear regret.
This is because we consider a continuous-time regret which is integrated (c.f., \cref{eq:optimal_control_on_true_system}) while in the discrete-time setting, the regret is defined for the equidistant MSS only~\citep{curi2020efficient}.
Accordingly, we study a strictly harder problem.
Nonetheless, by linearly increasing the number of observations per episode and setting $m_n = n$, we get $\sum_{n=1}^N \frac{1}{m_n} \in \mathcal{O}(\log(N))$ and sublinear regret.
The equidistant MSS is easy to implement, however, the required number of samples is increasing linearly with the number of episodes.

\paragraph{Adaptive}\label{paragraph: Aperiodic}
Finally, we present an MSS that is practical, i.e., easy to implement and at the same time requires only a few measurements per episode.
The core idea of receding horizon adaptive MSS is simple: simulate (hallucinate) the system $\vf_n$ with the policy $\vpi_n$, and find the time $t$ such that $\norm{\vsigma_{n-1}(\widehat{\vz}_n(t))}$
is largest.
Here, $\widehat{\vz}_n(t)$ is the state-action pair at time $t$ in episode $n$ for the hallucinated trajectory. 
\newpage
 \begin{wrapfigure}[20]{r}{0.48\textwidth}
  \begin{center}
     \includegraphics[width=\linewidth]{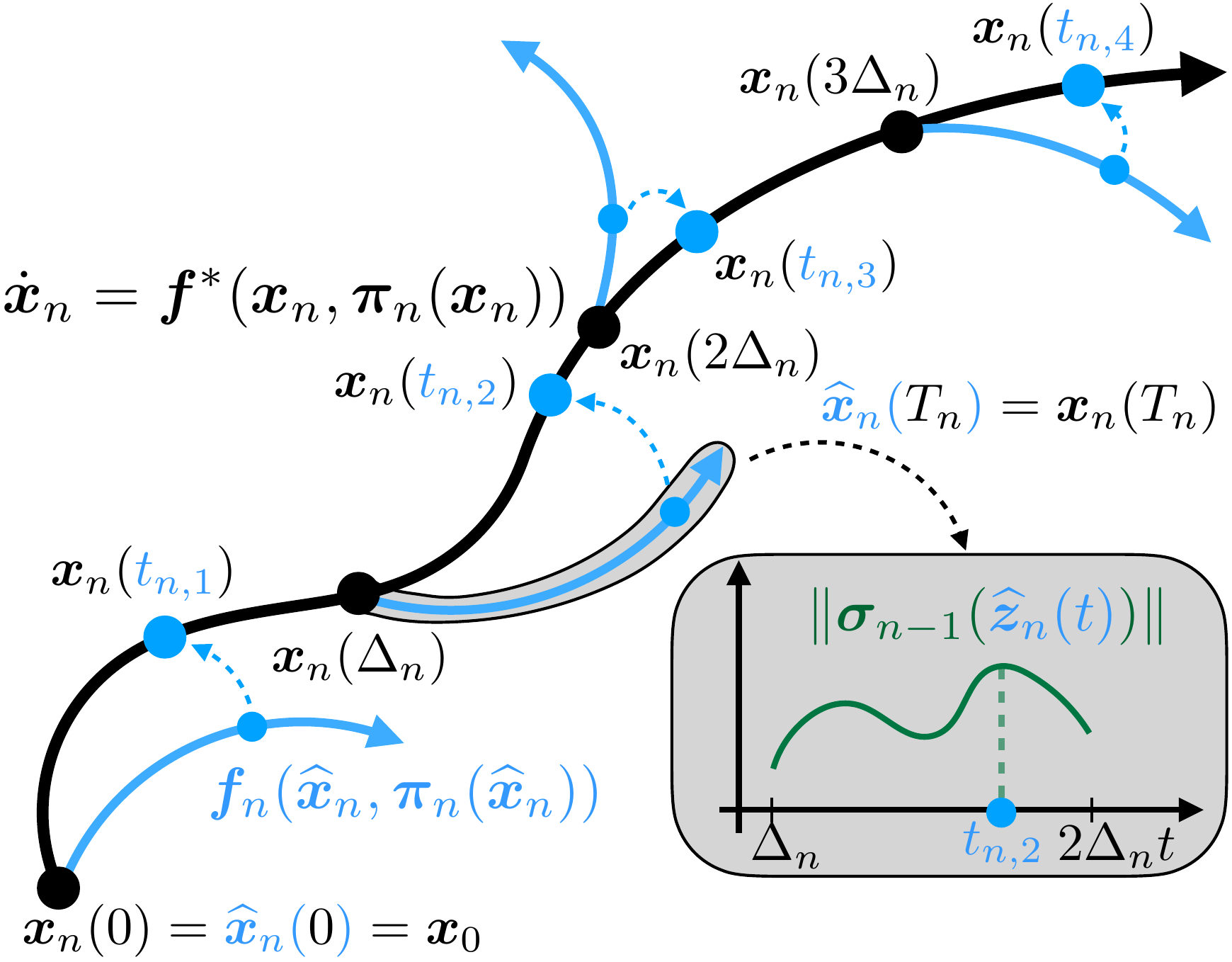}
  \end{center}
    \caption{In episode $n$ we split the horizon $T$ into intervals of $\Delta_n$ time. We hallucinate the trajectory in every interval and select time $t_{n,i}$ in the interval $i$ where the uncertainty on the hallucinated trajectory is the highest.}
    \label{fig: selecting measurement times}
\end{wrapfigure}
However, the hallucinated trajectory can deviate from the true trajectory exponentially fast in time,
  and the time of the largest variance on the hallucinated trajectory can be far away from the time of the largest variance on the true trajectory.
 To remedy this technicality, we utilize a receding horizon MSS.
 We split, in episode $n$, the time horizon $T$ into uniform-length intervals of $\Delta_n$ time, where $\Delta_n \in \Omega(\frac{1}{\beta_n})$, c.f.,~\cref{appendix: Theory}.
 At the beginning of every time interval, we measure the true state-action pair $\vz$, hallucinate with policy $\vpi_n$ starting from $\vz$ for $\Delta_n$ time on the system $\vf_n$, and calculate the time when the hallucinated trajectory has the highest variance.
 Over the next time horizon $\Delta_n$, we collect a measurement only at that time.
 Formally, let $m_n = \ceil{T/\Delta_n}$ be the number of measurements in episode $n$ and denote for every $i \in \set{1, \ldots, m_n}$ the time with the highest variance on the hallucinated trajectory in the time interval $[(i-1) \Delta_n, i \Delta_n]$ by $t_{n, i}$:
 \begin{equation*}
     \begin{aligned}
     t_{n, i} &\defeq (i-1)\Delta_n + \argmax_{0 \le t\le \Delta_n} \norm{\vsigma_{n-1}(\widehat{\vz}_n(t))} \\
     & \text{s.t.} \quad \dot{\widehat{\vx}}_n = \vf_n(\widehat{\vx}_n, \vpi_n({\widehat{\vx}_n})) \\
     &\quad \;\;\;\;\widehat{\vx}_n(0) = \vx_n((i-1)\Delta_n)
     \end{aligned}
 \end{equation*}
For this MSS, we show in \cref{appendix: Theory} the upper bound
\begin{align}
\label{eq:model_complexity_upper_bound_aperiodic}
    \complexity_N(\vf^*, S_n^{\text{\tiny{ADP}}}) \le \max_{\substack{\vpi_1, \ldots, \vpi_N \\ \vpi_n \in \Pi}} 9 \sum_{n=1}^N \frac{T}{m_n} \sum_{i=1}^{m_n} \norm{\vsigma_{n-1}\left(\vz_n(t_{n, i})\right)}^2.
\end{align}
\looseness-1
The model complexity upper bound for the adaptive MSS in \cref{eq:model_complexity_upper_bound_aperiodic} does not have the $L_{\vsigma^2}$-dependent term from the equidistant MSS \eqref{eq:model_complexitu_upper_bound_equidistant} and only depends on the sum of uncertainties at the collected points. Further, the number of points we collect in episode $n$ is $\mathcal{O}\left(\beta_n\right)$, and $\beta_n$, e.g., if we model the dynamics with GPs with radial basis function (RBF) kernel, is of order $\text{polylog}(n)$ (c.f.,~\cref{lemma:well_calibrated_GP_model}).

\subsection{Modeling dynamics with a Gaussian Process}
\label{subsection: Modeling dynamics with a Gaussian Process}

We can prove sublinear regret for the proposed three MSSs coupled with optimistic exploration when we model the dynamics using GPs with standard kernels such as RBF, linear, etc.
We consider the vector-valued dynamics model $\vf^*(\vz) = (f_1^*(\vz), \ldots, f_{d_x}^*(\vz))$, where scalar-valued functions $f_j^* \in \setH_k$ are elements of a Reproducing Kernel Hilbert Space (RKHS) $\setH_k$ with kernel function $k$, and their norm is bounded $\norm{f_j^*}_k \le B$. We write $\vf^* \in \setH_{k, B}^{d_x} \defeq \set{(f_1, \ldots, f_{d_x}) \mid \norm{f_j}_k \le B}$.

To learn $\vf^*$ we fit a GP model with a zero mean and kernel $k$ to the collected data $\setD_{1:n}$.
For ease of notation, we denote by $\dot{\vy}_{1:n}^j$ the concatenated $j$-th dimension of the state derivative observations from $\setD_{1:n}$.
The posterior means and variances of the GP model have a closed form (\cite{williams2006gaussian}):
\begin{align*}
    \mu_{n, j}(\vz) &= \vk_n^\top(\vz)\left(\mK_n + \sigma^2\mI\right)^{-1}\dot{\vy}_{1:n}^j \\
    \sigma^2_{n, j}(\vz) &= k(\vz, \vz) - \vk_n^\top(\vz)\left(\mK_n + \sigma^2\mI\right)^{-1}\vk_n(\vz),
\end{align*}
where we write $\mK_n = [k(\vz_l, \vz_m)]_{\vz_l, \vz_m \in \setD_{1:n}}$, and $\vk_n(\vz) = [k(\vz_l, \vz)]_{\vz_l \in \setD_{1:n}}$.
The posterior means and variances with the right scaling factor $\beta_n(\delta)$ satisfy the all-time well-calibrated \cref{assumption: Well Calibration Assumption}.

\begin{lemma}[Lemma 3.6 from \citet{rothfuss2023hallucinated}] \label{lemma:well_calibrated_GP_model}
    Let $\vf^* \in \setH_{k, B}^{d_x}$ and $\delta \in (0, 1)$.
    Then there exist $\beta_n(\delta) \in \mathcal{O}(\sqrt{\gamma_n + \log(1/\delta)})$ such that the confidence sets $\setM_{n}$ built from the triplets $(\vmu_{n}, \vsigma_{n}, \beta_n(\delta))$ form an all-time well-calibrated statistical model.
\end{lemma}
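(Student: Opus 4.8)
The plan is to reduce the vector-valued claim to a scalar one and then invoke a self-normalized concentration inequality for kernel ridge regression. Since $\vf^* = (f_1^*, \ldots, f_{d_x}^*)$ with each $f_j^* \in \setH_k$ and $\norm{f_j^*}_k \le B$, and since the GP posterior for coordinate $j$ is computed purely from the $\sigma$-sub-Gaussian observations $\dot{\vy}_{1:n}^j$ of $f_j^*$, I would first argue that it suffices to prove, for each fixed $j$, that with probability at least $1-\delta/d_x$ one has $\abs{\mu_{n,j}(\vz) - f_j^*(\vz)} \le \beta_n(\delta/d_x)\,\sigma_{n,j}(\vz)$ simultaneously for all $\vz \in \setZ$ and all $n \ge 0$. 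A union bound over $j \in \set{1, \ldots, d_x}$ then delivers the all-time calibration of \cref{definition: well-calibrated model} with total failure probability $\delta$; replacing $\log(1/\delta)$ by $\log(d_x/\delta)$ leaves the order $\mathcal{O}(\sqrt{\gamma_n + \log(1/\delta)})$ unchanged for fixed $d_x$.

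For the scalar bound, the key preliminary observation is that the measurement noise $\veps_{n,i}$ forms a martingale difference sequence with respect to the filtration generated by all data collected up to the choice of the $i$-th query: both the optimistic policy $\vpi_n$ from \eqref{eq:optimal_control} (which depends only on $\setD_{1:n-1}$) and the measurement times in $S_n$ are determined by strictly past observations, so each query point $\vz_n(t_{n,i})$ is predictable while $\veps_{n,i}$ is conditionally zero-mean and $\sigma$-sub-Gaussian. This places us exactly in the regime of the self-normalized RKHS concentration inequality (the method-of-mixtures bound due to Abbasi-Yadkori, in the kernelized form of \citet{chowdhury2017kernelized}). Applying it with ridge parameter $\sigma^2$ would give, with probability at least $1-\delta'$ and simultaneously for all $n \ge 0$ and all $\vz \in \setZ$,
\begin{align*}
    \abs{\mu_{n,j}(\vz) - f_j^*(\vz)} \le \left(B + \sigma\sqrt{2\left(\gamma_n + 1 + \log(1/\delta')\right)}\right)\sigma_{n,j}(\vz),
\end{align*}
where $\gamma_n \defeq \max_{\abs{A} = n} \frac{1}{2}\log\det\!\left(\mI + \sigma^{-2}\mK_A\right)$ is the maximal information gain after $n$ observations \citep{srinivas2009gaussian}. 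Setting $\beta_n(\delta') \defeq B + \sigma\sqrt{2(\gamma_n + 1 + \log(1/\delta'))}$ identifies the scaling factor, which is $\mathcal{O}(\sqrt{\gamma_n + \log(1/\delta')})$ and, since $\gamma_n$ is nondecreasing in $n$, is monotonically increasing as \cref{definition: well-calibrated model} demands.

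The main obstacle, and the reason the self-normalized machinery is essential rather than a naive per-step Gaussian tail bound, is obtaining validity that is simultaneously uniform over the continuum of inputs $\vz \in \setZ$ \emph{and} over all episodes $n \ge 0$ (the ``all-time'' requirement). The method of mixtures addresses both at once: it controls the growth of a supermartingale built from the noise sequence, which yields an anytime-valid statement with no additional union bound over $n$, while the RKHS/feature-space structure lets the per-point error be expressed through the posterior standard deviation $\sigma_{n,j}(\vz)$, covering every $\vz$ uniformly. The only care required is to verify the adaptedness of the design, which holds here precisely because the MSS $S$ and the optimistic policy depend only on past data, so the noise forms a legitimate martingale difference sequence with respect to the natural filtration.
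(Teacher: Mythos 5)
The paper does not actually prove this lemma --- it is imported verbatim as Lemma~3.6 of \citet{rothfuss2023hallucinated} --- and your reconstruction is precisely the standard argument behind that result: a coordinate-wise reduction with a union bound over the $d_x$ output dimensions, followed by the anytime-valid self-normalized RKHS concentration inequality of \citet{chowdhury2017kernelized}, yielding $\beta_n(\delta) = B + \sigma\sqrt{2(\gamma_n + 1 + \log(d_x/\delta))} \in \mathcal{O}(\sqrt{\gamma_n + \log(1/\delta)})$. Your argument is correct, including the essential observation that the adaptivity of the MSS and of the optimistic policy is harmless because each query point remains predictable with respect to the noise filtration, which is exactly the condition under which the method-of-mixtures bound applies.
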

Here, $\gamma_n$ is the \emph{maximum information gain} after observing $n$ points~\citep{srinivas2009gaussian}, as defined in \cref{appendix: Theory}, where we also provide the rates for common kernels.
For example, for the RBF kernel, $\gamma_n = \mathcal{O}\left(\log(n)^{d_x + d_u + 1}\right)$.

\looseness-1
Finally, we show sublinear regret for the proposed MSSs for the case when we model dynamics with GPs. The proof of \cref{theorem: sublinear regret for different MSSs} is provided in \cref{appendix: Theory}.

\begin{theorem}
\label{theorem: sublinear regret for different MSSs}
    Assume that $\vf^* \in \setH_{k, B}^{d_x}$, the observation noise is i.i.d. $\mathcal{N}(\vzero; \sigma^2 \mI)$, and $\norm{\cdot}$ is the Euclidean norm.
    We model $\vf^*$ with the GP model.
    The regret for different MSSs is with probability at least $1-\delta$ bounded by
    \begin{align*}
        R_N(S^{\text{\tiny{ORA}}}) &\leq \mathcal{O}\left(\beta_{N}T^2e^{L_{\vf}(1 + L_{\vpi})T}\sqrt{N\gamma_N}\right),  && m_n^{\text{\tiny{ORA}}} = 1 \\
        R_N(S^{\text{\tiny{EQI}}}) &\leq \mathcal{O}\left(\beta_{N}T^2e^{L_{\vf}(1 + L_{\vpi})T}\sqrt{N\left(\gamma_{N} + \log(N)\right)}\right),  && m_n^{\text{\tiny{EQI}}} = n \\
        R_N(S^{\text{\tiny{ADP}}}) &\leq \mathcal{O}\left(\beta_{N}T^2e^{L_{\vf}(1 + L_{\vpi})T}\sqrt{N\gamma_{N}}\right), && m_n^{\text{\tiny{ADP}}} = \mathcal{O}(\beta_n)
    \end{align*}
\end{theorem}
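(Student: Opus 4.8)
The plan is to apply the general regret bound of \cref{proposition: general regret bound}, whose hypotheses are met because \cref{lemma:well_calibrated_GP_model} guarantees that the GP posterior, rescaled by $\beta_n \in \mathcal{O}(\sqrt{\gamma_n + \log(1/\delta)})$, is an all-time well-calibrated model in the sense of \cref{assumption: Well Calibration Assumption}. It then suffices to bound the model complexity $\complexity_N(\vf^*, S)$ of each MSS by $\mathcal{O}(T\gamma_N)$ (up to an additive $\log N$ in the equidistant case); substituting this into \cref{eq:general_regret_bound} and absorbing the resulting $\sqrt{T}$ into the prefactor turns the $T^{3/2}$ there into $T^2$ and leaves exactly the stated $\beta_N T^2 e^{L_\vf(1+L_\vpi)T}\sqrt{N\gamma_N}$ scaling. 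Thus the whole theorem reduces to three model-complexity estimates, all of which rest on a single reduction from the batched, fixed-within-episode posterior variances to a genuinely sequential sum controlled by the information gain.

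The core step is to show, for each coordinate $j$, that
\[
\sum_{n=1}^N \frac{1}{m_n}\sum_{i=1}^{m_n}\sigma_{n-1,j}^2\!\left(\vz_n(t_{n,i})\right) \;\le\; \frac{2}{\log(1+\sigma^{-2})}\,\gamma_N,
\]
uniformly over the policy sequence $\vpi_1,\ldots,\vpi_N$. I would argue in three moves. First, bound the episodic average by its maximum, $\frac{1}{m_n}\sum_{i}\sigma_{n-1,j}^2(\vz_n(t_{n,i})) \le \sigma_{n-1,j}^2(\vz_n(t_{n,i_n^*}))$, where $i_n^*$ indexes the most uncertain of the $m_n$ collected points; crucially this point is actually measured in episode $n$. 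Second, use monotonicity of the GP posterior variance in the dataset: replacing the full data $\setD_{1:n-1}$ by only the $n-1$ previously selected maximizers $\vz_1(t_{1,i_1^*}),\ldots,\vz_{n-1}(t_{n-1,i_{n-1}^*})$ can only increase the variance, so $\sigma_{n-1,j}^2(\vz_n(t_{n,i_n^*})) \le \widehat{\sigma}_{n-1,j}^2(\vz_n(t_{n,i_n^*}))$ for the posterior $\widehat{\sigma}$ built from those $n-1$ points alone. Third, the right-hand side is now a strictly sequential sum of posterior variances over $N$ observations, so the standard information-gain bound (Lemma 5.4 of \citet{srinivas2009gaussian}) applies and yields $\gamma_N$ rather than $\gamma_{\sum_n m_n}$. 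Because that sum-of-variances bound is worst-case over query points, the estimate is automatically uniform over the policy choices, which disposes of the outer maximization in $\complexity_N$; summing over the $d_x$ coordinates contributes the factor $d_x$.

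With this reduction the three cases are quick. For the \emph{oracle} MSS we have $m_n = 1$ and the single collected point is the maximizer, so $\complexity_N(\vf^*, S^{\text{\tiny{ORA}}}) \le \mathcal{O}(T d_x \gamma_N)$ and \cref{eq:general_regret_bound} gives the first line. For the \emph{adaptive} MSS the bound \eqref{eq:model_complexity_upper_bound_aperiodic} is exactly $9T$ times the batched average above, so the same $\mathcal{O}(T d_x \gamma_N)$ estimate applies, and using $m_n = \mathcal{O}(\beta_n)$ (from $\Delta_n \in \Omega(1/\beta_n)$ with $m_n=\ceil{T/\Delta_n}$) produces the third line. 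For the \emph{equidistant} MSS the only new ingredient is the residual Darboux term in \eqref{eq:model_complexitu_upper_bound_equidistant}: with $m_n = n$ it contributes $\sum_{n=1}^N T^2 L_{\vsigma^2}/n = \mathcal{O}(T^2 L_{\vsigma^2}\log N)$, which is precisely what upgrades $\gamma_N$ to $\gamma_N + \log N$ inside the square root.

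I expect the main obstacle to be the second move of the reduction, i.e.\ passing from the fixed-within-episode variances $\vsigma_{n-1}$ to a sequential sum so that the complexity is governed by $\gamma_N$ and not by the much larger $\gamma_{\sum_n m_n}$ (which would be vacuous for the equidistant choice $m_n=n$). The averaging factor $1/m_n$ in the model complexities is exactly what makes the ``average $\le$ max'' collapse lossless enough for this to succeed, and one must check that the selected maximizer is genuinely among the measured points so that the monotonicity step is legitimate; the uniformity over the worst-case policy sequence is then immediate from the worst-case nature of the Srinivas bound.
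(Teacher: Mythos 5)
Your proposal is correct and follows essentially the same route as the paper: the general regret bound of \cref{proposition: general regret bound} is combined with the per-MSS model-complexity bounds, and the key reduction (bound the episodic average by the maximizing measured point, use monotonicity of the GP posterior variance to pass to the sequential subsequence of $N$ maximizers, then invoke the information-gain bound to obtain $\gamma_N$ rather than $\gamma_{\sum_n m_n}$) is exactly the paper's \cref{lemma: bound uncertainty by info gain}. The only cosmetic differences are the citation (the paper assembles the last step from Lemma 5.3 of \citet{srinivas2009gaussian} together with Lemma 15 of \citet{curi2020efficient}) and the constant, where the paper keeps the general $\bar{\sigma} = \sigma_{\max}^2 / \log(1 + \sigma^{-2}\sigma_{\max}^2)$ instead of specializing to $k \le 1$.
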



\looseness-1
The optimistic exploration coupled with any of the proposed MSSs achieves regret that depends on the maximum information gain.
All regret bounds from \cref{theorem: sublinear regret for different MSSs} are sublinear for common kernels, like linear and RBF.
The bound for the adaptive MSS with RBF kernel is $\mathcal{O}\left(\sqrt{N\log(N)^{2(d_x + d_u + 1)}}\right)$, where we hide the dependence on $T$ in the $\mathcal{O}$ notation. We reiterate that while the number of observations per episode of the oracle MSS is optimal, we cannot implement the oracle MSS in practice.
In contrast, the equidistant MSS is easy to implement, but the number of measurements grows linearly with episodes.
Finally, adaptive MSS is practical, i.e., easy to implement, and requires only a few measurements per episode, i.e., $\text{polylog}(n)$ in episode $n$ for RBF.

\paragraph{Summary} \ocorl consists of two key and orthogonal components; \emph{(i)} optimistic policy selection and \emph{(ii)} measurement selection strategies (MSSs). In optimistic policy selection, we optimistically, w.r.t.~plausible dynamics, plan a trajectory and rollout the resulting policy. We study different MSSs, such as the typical equidistant MSS, for data collection within the framework of continuous time modeling. Furthermore, we propose an adaptive MSS that measures data where we have the highest uncertainty on the planned (hallucinated) trajectory. We show that \ocorl suffers no regret for the equidistant, adaptive, and oracle MSSs. 
\section{Related work}
\label{section: Related work}

\paragraph{Model-based Reinforcement Learning}\label{paragraph:Model-based Reinforcement Learning}
\looseness-1
Model-based reinforcement learning (MBRL) has been an active area of research in recent years, addressing the challenges of learning and exploiting environment dynamics for improved decision-making. Among the seminal contributions, \citet{deisenroth2011pilco} proposed the PILCO algorithm which uses Gaussian processes for learning the system dynamics and policy optimization. \citet{chua2018deep} used deep ensembles as dynamics models. They coupled MPC \citep{morari1999model} to efficiently solve high dimensional tasks with considerably better sample complexity compared to the state-of-the-art model-free methods SAC \citep{haarnoja2018soft}, PPO \citep{schulman2017proximal}, and DDPG \citep{lillicrap2015continuous}. The aforementioned model-based methods use more or less greedy exploitation that is provably optimal only in the case of the linear dynamics \citep{simchowitz2020naive}. Exploration methods based on Thompson sampling \citep{dearden2013model,chowdhury2019online} and Optimism \citep{auer2008near,abbasi2011regret,luo2018algorithmic,curi2020efficient}, however, provably converge to the optimal policy also for a large class of nonlinear dynamics if modeled with GPs. Among the discrete-time RL algorithms, our work is most similar to the work of \citet{curi2020efficient} where they use the reparametrization trick to explore optimistically among all statistically plausible discrete dynamical models.

\paragraph{Continuous-time Reinforcement Learning}\label{paragraph:Continuous-time Reinforcement Learning}
\looseness-1
Reinforcement learning in continuous-time has been around for several decades \citep{doya2000reinforcement, vrabie2008adaptive, vrabie2009neural, vamvoudakis2009online}. 
Recently, the field has gained more traction following the work on Neural ODEs by \citet{chen2018neural}. 
While physics biases such as Lagrangian \citep{cranmer2020lagrangian} or Hamiltonian \citep{greydanus2019hamiltonian} mechanics can be enforced in continuous-time modeling, different challenges such as vanishing $Q$-function \citep{tallec2019making} need to be addressed in the continuous-time setting. 
\citet{yildiz2021continuous} introduce a practical episodic model-based algorithm in continuous time. 
In each episode, they use the learned mean estimate of the ODE model to solve the optimal control task with a variant of a continuous-time actor-critic algorithm. 
Compared to \citet{yildiz2021continuous} we solve the optimal control task using the optimistic principle. Moreover, we thoroughly motivate optimistic planning from a theoretical standpoint. 
\citet{lutter2021value} introduce a continuous fitted value iteration and further show a successful hardware application of their continuous-time algorithm.
A vast literature exists for continuous-time RL with linear systems \citep{modares2014linear,wang2020reinforcement}, but few, only for linear systems, provide theoretical guarantees \citep{mohammadi2021convergence,basei2022logarithmic}. To the best of our knowledge, we are the first to provide theoretical guarantees of  \emph{convergence to the optimal cost in continuous time} for a large class of RKHS functions.

\paragraph{Aperiodic strategies}\label{paragraph: Aperiodic strategies}
\looseness-1
Aperiodic MSSs and controls (event and self-triggered control) are mostly neglected in RL since most RL works predominantly focus on discrete-time modeling.
There exists a considerable amount of literature on the event and self-triggered control \citep{astrom2002comparison,anta2010sample,heemels2012introduction}.
However, compared to periodic control, its theory is far from being mature \citep{heemels2021event}.
In our work, we assume that we can control the system continuously, and rather focus on when to measure the system instead.
The closest to our adaptive MSS is the work of \citet{du2020model}, where they empirically show that by optimizing the number of interaction times, they can achieve similar performance (in terms of cost) but with fewer interactions. 
Compared to us, they do not provide any theoretical guarantees.
\citet{umlauft2019feedback,lederer2021gaussian} consider the non-episodic setting where they can continuously monitor the system. They suggest taking measurements only when the uncertainty of the learned model on the monitored trajectory surpasses the boundary ensuring stability.
They empirically show, for feedback linearizable systems, that by applying their strategy the number of measurements reduces drastically and the tracking error remains bounded.
Compared to them, we consider general dynamical systems and also don't assume continuous system monitoring.

\section{Experiments}
\label{section: Experiments}
We now empirically evaluate the performance of \ocorl on several environments. We test \ocorl on
{\em Cancer Treatment and Glucose in blood systems} from \citet{howe2022myriad}, {\em Pendulum}, {\em Mountain Car} and {\em Cart Pole} from \citet{brockman2016openai}, {\em Bicycle} from \citet{polack2017kinematic}, {\em Furuta Pendulum} from \citet{lutter2021value} and {\em Quadrotor in 2D and 3D} from \citet{nonami2010autonomous}. The details of the systems' dynamics and tasks are provided in \cref{appendix: Experimental Setup}.

\paragraph{Comparison methods}
\looseness-1 
To make the comparison fair, we adjust methods so that they all collect the same number of measurements per episode.
For the equidistant setting, we collect $M$ points per episode (we provide values of $M$ for different systems in \cref{appendix: Experimental Setup}).
For the adaptive MSS, we assume $\Delta_n \ge T$, and instead of one measurement per episode we collect a batch of $M$ measurements such that they (as a batch) maximize the variance on the hallucinated trajectory.
To this end, we consider the {\em Greedy Max Determinant} and {\em Greedy Max Kernel Distance} strategies of \citet{holzmuller2022framework}.
We provide details of the adaptive strategies in \cref{appendix: Experimental Setup}. 
We compare \ocorl with the described MSSs to the optimal discrete-time zero hold control, where we assume the access to the true discretized dynamics $\vf_d^*(\vx, \vu) = \vx + \int_0^{T/(M-1)}\vf^*(\vx(t), \vu) \,dt$.
We further also compare with the best continuous-time control policy, i.e., the solution of \cref{eq:optimal_control_on_true_system}.

\paragraph{Does the continuous-time control policy perform better than the discrete-time control policy?} 
\looseness-1
In the first experiment, we test whether learning a continuous-time model from the finite data coupled with a continuous-time control policy on the learned model can outperform the discrete-time zero-order hold control on the true system.
We conduct the experiment on all environments and report the cost after running \ocorl for a few tens of episodes (the exact experimental details are provided in~\cref{appendix: Experimental Setup}). 
From~\cref{table:evaluation_on_several_systems}, we conclude that the \ocorl outperforms the discrete-time zero-order hold control on the true model on every system if we use the adaptive MSS, while achieving lower cost on 7 out of 9 systems if we measure the system equidistantly.

\begin{table}[H]
\centering
\caption{\looseness-1 \ocorl with adaptive MSSs achieves lower final cost $C(\vpi_N, \vf^*)$ compared to the discrete-time control on the true system on all tested environments while converging towards the best continuous-time control policy. While equidistant MSS achieves higher cost compared to the adaptive MSS, it still outperforms the discrete-time zero-order hold control on the true model for most systems.}
\label{table:evaluation_on_several_systems}
\resizebox{\columnwidth}{!}{
\begin{tabular}{@{}llllll@{}}
\toprule
\multicolumn{1}{c}{\multirow{4}{*}{System}}& \multicolumn{2}{c}{Known True Model}  & \multicolumn{3}{c}{Optimistic Exploration with different MSS} \\
\cmidrule(lr{1em}){2-3} \cmidrule{4-6}
                 & \begin{tabular}[c]{@{}c@{}}\footnotesize{Continuous} \\ \footnotesize{time OC}\end{tabular} & \begin{tabular}[c]{@{}c@{}}\footnotesize{Discrete zero-} \\ \footnotesize{order hold OC}\end{tabular} & \begin{tabular}[c]{@{}c@{}}\footnotesize{Max Kernel} \\ \footnotesize{Distance}\end{tabular}           & \begin{tabular}[c]{@{}c@{}}\footnotesize{Max} \\ \footnotesize{Determinant}\end{tabular}        & \begin{tabular}[c]{@{}c@{}}\footnotesize{Equidistant}\end{tabular}         \\ \midrule
Cancer Treatment & 20.57          & 21.05          & $20.70 \pm 0.06$ & $\mathbf{20.68 \pm 0.05}$ & $21.05 \pm 1.60$    \\
Glucose in Blood &  15.23              &  15.30              &  $\mathbf{15.23 \pm 0.01}$ &  $15.24 \pm 0.01$    &       $15.25 \pm 0.01$              \\
Pendulum         & 20.16          & 20.59          & $\mathbf{20.20 \pm 0.02}$ & $\mathbf{20.20 \pm 0.02}$ & $20.29 \pm 0.03$    \\
Mountain Car     & 34.63          & 35.04          & $\mathbf{34.63 \pm 0.01}$ & $\mathbf{34.63 \pm 0.01}$ & $34.64 \pm 0.01$    \\
Cart Pole        & 17.49          & 19.96          & $\mathbf{17.52 \pm 0.04}$ & $17.53 \pm 0.04$ & $17.63 \pm 0.05$    \\
Bicycle          & 9.45           & 10.24          & $\mathbf{9.53 \pm 0.02}$  & $9.53 \pm 0.03$  & $9.67 \pm 0.05$     \\
Furuta Pendulum  & 23.31          & 25.11          & $23.64 \pm 0.22$ & $\mathbf{23.52 \pm 0.18}$ & $314.77 \pm 411.01$ \\
Quadrotor 2D     & 3.54           & 4.01           & $\mathbf{3.54 \pm 0.01}$  & $\mathbf{3.54 \pm 0.01}$  & $3.57 \pm 0.01$     \\
Quadrotor 3D     & 7.38           & 7.84               &   $\mathbf{7.51 \pm 0.21}$               &    $7.54 \pm 0.28$             &     $9.41 \pm  1.43$            \\ \bottomrule
\end{tabular}
}
\end{table}

\paragraph{Does the adaptive MSS perform better than equidistant MSS?}
We compare the adaptive and equidistant MSSs on all systems and observe that the adaptive MSSs consistently perform better than the equidistant MSS. To better illustrate the difference between the adaptive and equidistant MSSs we study a 2-dimensional Pendulum system (c.f., \Cref{fig:mss_comparison_on_pendulum}). First, we see that if we use adaptive MSSs we consistently achieve lower per-episode costs during the training.
Second, we observe that while equidistant MSS spreads observations equidistantly in time and collects lots of measurements with almost the same state-action input of the dynamical system, the adaptive MSS spreads the measurements to have diverse state-action input pairs for the dynamical system on the executed trajectory and collects higher quality data.

\begin{figure}[H]
    \centering
    \includegraphics[width=\linewidth]{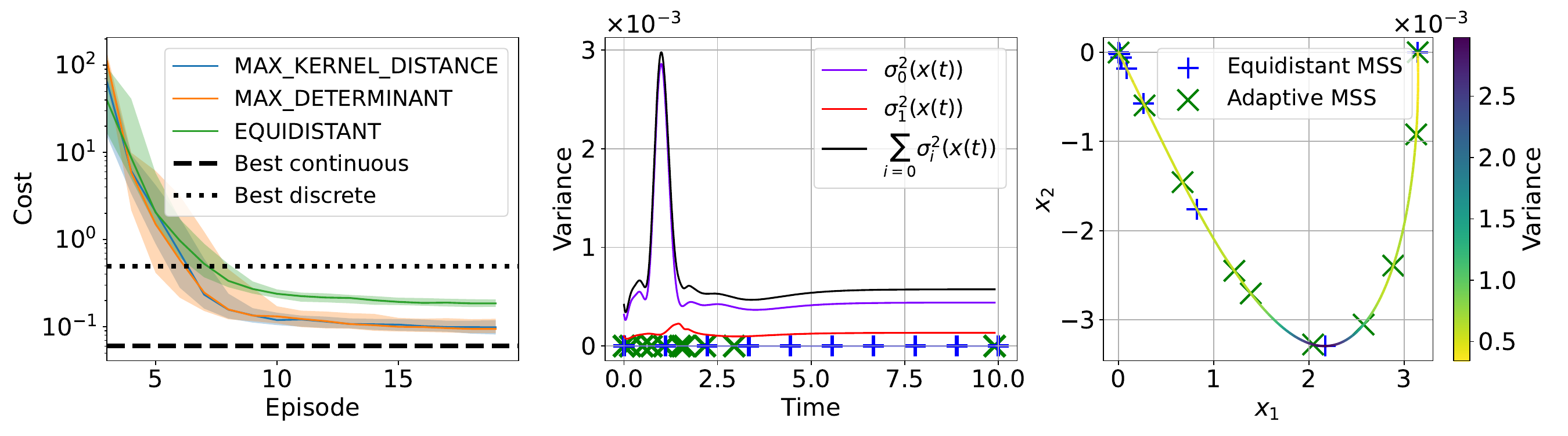}
    \caption{All MSSs coupled with continuous-time control achieve lower cost than the optimal discrete zero-order hold control on the true model. Adaptive MSSs (Greedy Max Kernel Distance and Greedy Max Determinant) reduce the suffered cost considerably faster than equidistant MSS and are converging towards the best possible continuous-time control. Whereas equidistant MSS spreads the measurements uniformly over time, the adaptive MSSs spread the data over the state-action space (dynamical system's input) and collect higher quality data.
    }
    \label{fig:mss_comparison_on_pendulum}
\end{figure}

\paragraph{Does optimism help?}
For the final experiment, we examine whether planning optimistically helps. In particular, we compare our planning strategy to the mean planner that is also used by~\citet{yildiz2021continuous}. The mean planner solves the optimal control problem greedily with the learned mean model $\vmu_n$ in every episode $n$. We evaluate the performance of this model on the Pendulum system for all MSSs. We observe that planning optimistically results in reducing the cost faster and achieves better performance for all MSSs.

\begin{figure}[H]
    \centering
    \includegraphics[width=\linewidth]{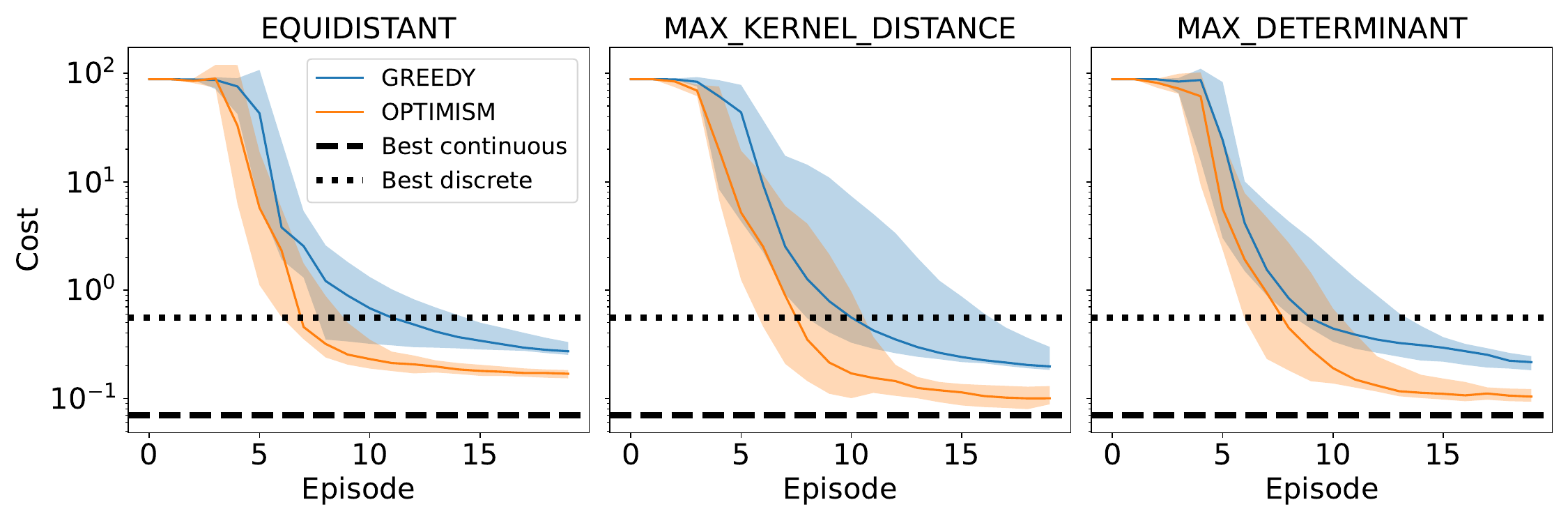}
    \caption{\looseness-1 Both greedy and optimistic continuous-time control strategies outperform the discrete-time zero-order hold control on the true model. However, the optimistic strategy is particularly notable as it expedites cost reduction across all MSSs.}
    \label{fig:my_label}
\end{figure}

\section{Conclusion}
\label{section: Conclusion}
\looseness-1
We introduce a model-based continuous-time RL algorithm \ocorl that uses the \emph{optimistic paradigm} to provably achieve sublinear regret, with respect to the best possible continuous-time performance, for several MSSs if modeled with GPs.
Further, we develop a practical \emph{adaptive} MSS that, compared to the standard equidistant MSS, drastically reduces the number of measurements per episode while retaining the regret guarantees.
Finally, we showcase the benefits of continuous-time compared to discrete-time modeling in several environments (c.f. \cref{fig: selecting measurement times}, \cref{table:evaluation_on_several_systems}), and demonstrate the benefits of planning with \emph{optimism} compared to greedy planning. 

In this work, we considered the setting with deterministic dynamics where we obtain a noisy measurement of the state's derivative. We leave the more practical setting, where only noisy measurements of the state, instead of its derivative, are available as well as stochastic, delayed differential equations, and partially observable systems to future work.

Our aim with this work is to catalyze further research within the RL community on continuous-time modeling.
We believe that this shift in perspective could lead to significant advancements in the field and look forward to future contributions.

\begin{ack}
We would like to thank Yarden As, Scott Sussex and Dominik Baumann for their feedback. 
This project has received funding from the Swiss National Science Foundation under NCCR Automation, grant agreement 51NF40 180545, and the Microsoft Swiss Joint Research Center.
\end{ack}

\newpage
{
\bibliographystyle{apalike}
\bibliography{refs.bib}
}

\newpage
\appendix

\renewcommand*\contentsname{Contents of Appendix}
\etocdepthtag.toc{mtappendix}
\etocsettagdepth{mtchapter}{none}
\etocsettagdepth{mtappendix}{subsection}
\tableofcontents

\newpage
\section{Theory}
\label{appendix: Theory}

We begin in \cref{subsection: Bounding Regret with the Model Complexity} by deriving the cumulative regret bound of \cref{proposition: general regret bound} which depends on the model complexity.
In \cref{subsection: Bounding Model Complexities}, we bound the model complexities of the presented MSSs.
Finally, in \cref{subsection: Bounding Measurement Uncertainty of GPs}, we bound the measurement uncertainty when a GP is used as the statistical model, yielding sublinear regret guarantees for many commonly used kernels.
We include useful facts and inequalities in \cref{subsection: Useful Facts and Inequalities}.

\subsection{Bounding Regret with the Model Complexity}
\label{subsection: Bounding Regret with the Model Complexity}

The main goal of this section is to derive \cref{proposition: general regret bound} which relates the cumulative regret $R_N(S)$ to the model complexity $\complexity_N(\vf^*, S)$.
In what follows, the quantifier ``with high probability'' refers to ``with probability at least $1-\delta$ as in \cref{definition: well-calibrated model}''.

\begin{lemma}
\label{lemma: trajectories distance bound}
    Assuming $\vx_n(0) = \widehat{\vx}_n(0)$, the distance between the true and the hallucinated trajectory at any time $t \geq 0$ is bounded with high probability by \begin{align}
        \norm{\widehat{\vx}_n(t) - \vx_n(t)} \le 2 \beta_{n} e^{L_{\vf} (1 + L_{\vpi}) t} \int_{0}^t \norm{\vsigma_{n-1}\left(\vz_n(s)\right)} \,ds \label{eq:trajectories_dist_bound}
    \end{align} and \begin{align}
        \norm{\widehat{\vx}_n(t) - \vx_n(t)} \le 2 \beta_{n} e^{L_{\vf} (1 + L_{\vpi}) t} \int_{0}^t \norm{\vsigma_{n-1}\left(\widehat{\vz}_n(s)\right)} \,ds. \label{eq:trajectories_dist_bound2}
    \end{align}
\end{lemma}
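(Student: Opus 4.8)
The plan is to write both trajectories in integral form, subtract them, bound the resulting integrand pointwise, and close the argument with Grönwall's inequality. Since $\vx_n(0) = \widehat{\vx}_n(0)$, setting $d(t) \defeq \norm{\widehat{\vx}_n(t) - \vx_n(t)}$ and using $\dot{\vx}_n = \vf^*(\vz_n)$, $\dot{\widehat{\vx}}_n = \vf_n(\widehat{\vz}_n)$ gives $d(t) \le \int_0^t \norm{\vf_n(\widehat{\vz}_n(s)) - \vf^*(\vz_n(s))}\,ds$, where $\vz_n$ and $\widehat{\vz}_n$ are the true and hallucinated state-action pairs. The two inequalities \eqref{eq:trajectories_dist_bound} and \eqref{eq:trajectories_dist_bound2} then arise from the two natural ways of inserting an intermediate evaluation inside the integrand via the triangle inequality.

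For \eqref{eq:trajectories_dist_bound2} I would insert $\vf^*(\widehat{\vz}_n(s))$. The first piece $\norm{\vf_n(\widehat{\vz}_n(s)) - \vf^*(\widehat{\vz}_n(s))}$ is controlled by calibration: both $\vf_n$ (the optimistic model, by construction in $\setM_{n-1}$) and $\vf^*$ (by \cref{assumption: Well Calibration Assumption}, with high probability) lie in $\setM_{n-1}$, so coordinatewise $|f_{n,j} - f^*_j|(\widehat{\vz}_n(s)) \le 2\beta_{n-1}\sigma_{n-1,j}(\widehat{\vz}_n(s)) \le 2\beta_n\sigma_{n-1,j}(\widehat{\vz}_n(s))$, and summing over the $d_x$ coordinates yields $\le 2\beta_n\norm{\vsigma_{n-1}(\widehat{\vz}_n(s))}$. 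The second piece $\norm{\vf^*(\widehat{\vz}_n(s)) - \vf^*(\vz_n(s))}$ uses $L_{\vf}$-Lipschitzness of $\vf^*$ and $L_{\vpi}$-Lipschitzness of $\vpi_n$, via $\norm{\widehat{\vz}_n(s) - \vz_n(s)} \le (1+L_{\vpi})d(s)$, so this piece is $\le L_{\vf}(1+L_{\vpi})d(s)$.

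For \eqref{eq:trajectories_dist_bound} I would instead insert $\vf_n(\vz_n(s))$. Now the Lipschitz piece $\norm{\vf_n(\widehat{\vz}_n(s)) - \vf_n(\vz_n(s))}$ relies on $\vf_n \in \setF$ being $L_{\vf}$-Lipschitz (this is exactly why the optimistic model is drawn from $\setM_{n-1}\cap\setF$ rather than $\setM_{n-1}$ alone), again giving $\le L_{\vf}(1+L_{\vpi})d(s)$, while the calibration piece $\norm{\vf_n(\vz_n(s)) - \vf^*(\vz_n(s))}$ is now evaluated on the \emph{true} trajectory and bounded by $2\beta_n\norm{\vsigma_{n-1}(\vz_n(s))}$.

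In both cases this produces $d(t) \le a(t) + L_{\vf}(1+L_{\vpi})\int_0^t d(s)\,ds$ with $a(t) = 2\beta_n\int_0^t\norm{\vsigma_{n-1}(\cdot)}\,ds$ nondecreasing in $t$, and the integral form of Grönwall's inequality then gives $d(t) \le a(t)\,e^{L_{\vf}(1+L_{\vpi})t}$, which is precisely the claim. The argument is short; the only points needing care are the bookkeeping between $\beta_{n-1}$ and $\beta_n$ (absorbed by monotonicity of $\beta_n$ in $n$), the passage from coordinatewise calibration to the vector-norm bound with the factor $2$, and making sure the choice of intermediate term is paired with a function whose Lipschitz constant is actually available ($\vf^*$ for \eqref{eq:trajectories_dist_bound2}, $\vf_n$ for \eqref{eq:trajectories_dist_bound}). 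I expect this last matching to be the main conceptual obstacle, since it is what motivates intersecting with $\setF$; everything else is routine.
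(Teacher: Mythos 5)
Your proposal is correct and follows essentially the same route as the paper's proof: write the trajectory gap in integral form, split the integrand via the triangle inequality using the intermediate term $\vf_n(\vz_n(s))$ for \cref{eq:trajectories_dist_bound} and $\vf^*(\widehat{\vz}_n(s))$ for \cref{eq:trajectories_dist_bound2}, bound the two pieces by calibration and the closed-loop Lipschitz constant $L_{\vf}(1+L_{\vpi})$ respectively, and conclude with Grönwall's inequality. Your observation about which intermediate term must be paired with which Lipschitz constant (and why the optimistic model is restricted to $\setM_{n-1}\cap\setF$) is exactly the point the paper relies on implicitly.
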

\begin{proof}
    \begin{equation*}
        \begin{aligned}
            \norm{\widehat{\vx}_n(t) - \vx_n(t)} &= \norm{\int_{0}^t \vf_{n-1}(\widehat{\vz}_n(s)) - \vf^*(\vz_n(s)) \,ds} \\
            & \le \int_{0}^t \norm{\vf_{n-1}(\widehat{\vz}_n(s)) - \vf^*(\vz_n(s))} \,ds \\
            & \le \begin{multlined}[t]
                \int_{0}^t \norm{\vf_{n-1}(\widehat{\vz}_n(s)) - \vf_{n-1}(\vz_n(s))} + \norm{\vf_{n-1}(\vz_n(s)) - \vf^*(\vz_n(s))} \,ds
            \end{multlined} \\
            &\le L_{\vf} (1 + L_{\vpi}) \int_{0}^t \norm{\vx_n(s) - \widehat{\vx}_n(s)} \,ds + 2 \beta_{n} \int_{0}^t \norm{\vsigma_{n-1}\left(\vz_n(s)\right)} \,ds
        \end{aligned}
    \end{equation*} where the final inequality follows from \cref{lemma: closed-loop Lipschitz bound} and \cref{definition: well-calibrated model} (with high probability).
    We obtain \cref{eq:trajectories_dist_bound} by applying Grönwall's inequality (\cref{lemma: Grönwall's inequality}).

    \Cref{eq:trajectories_dist_bound2} is obtained analogously by expanding with $\vf^*(\widehat{\vz}_n(s))$ rather than $\vf_{n-1}(\vz_n(s))$.
\end{proof}

\begin{lemma}
\label{lemma: instantaneous regret bound}
    For any MSS $S$, the regret of any episode $n$ is bounded with high probability by
    \begin{align}
        r_n(S) \le 2 \beta_{n} L_c (1 + L_{\vpi}) T e^{L_\vf (1 + L_{\vpi}) T} \int_{0}^T \norm{\vsigma_{n-1}\left(\vz_n(t)\right)} \,dt.
    \end{align}
\end{lemma}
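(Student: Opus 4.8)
The plan is to combine the \emph{optimism} built into the policy/model selection with the trajectory-distance bound of \cref{lemma: trajectories distance bound}. First I would establish the optimism inequality. Since $(\vpi_n, \vf_{n-1})$ is, by construction, a minimizer of $C(\vpi, \vf)$ over $\vpi \in \Pi$ and $\vf \in \setM_{n-1} \cap \setF$, and since with high probability $\vf^* \in \setM_{n-1} \cap \setF$ (by \cref{definition: well-calibrated model}) while $\vpi^* \in \Pi$, the pair $(\vpi^*, \vf^*)$ is feasible for this minimization. Hence with high probability $C(\vpi_n, \vf_{n-1}) \le C(\vpi^*, \vf^*)$, and therefore
\begin{align*}
    r_n(S) = C(\vpi_n, \vf^*) - C(\vpi^*, \vf^*) \le C(\vpi_n, \vf^*) - C(\vpi_n, \vf_{n-1}).
\end{align*}
This reduces the problem to comparing the cost of running the \emph{same} policy $\vpi_n$ on the true dynamics $\vf^*$ versus on the optimistic model $\vf_{n-1}$.

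Next I would expand both costs as integrals of the running cost $c$ along their respective trajectories: $C(\vpi_n, \vf^*)$ is evaluated on the true rollout $\vx_n(t)$, while $C(\vpi_n, \vf_{n-1})$ is evaluated on the hallucinated rollout $\widehat{\vx}_n(t)$, both starting at $\vx_0$ so that $\vx_n(0) = \widehat{\vx}_n(0)$. Writing the difference under a single integral and using $L_c$-Lipschitz continuity of $c$ together with $L_{\vpi}$-Lipschitz continuity of $\vpi_n$ (i.e.\ the closed-loop Lipschitz bound that contributes the factor $1 + L_{\vpi}$ on the state-action pair), the integrand is bounded pointwise:
\begin{align*}
    \abs{c(\vx_n(t), \vpi_n(\vx_n(t))) - c(\widehat{\vx}_n(t), \vpi_n(\widehat{\vx}_n(t)))} \le L_c (1 + L_{\vpi}) \norm{\vx_n(t) - \widehat{\vx}_n(t)}.
\end{align*}
I would then substitute \cref{eq:trajectories_dist_bound}, and crudely upper bound the exponential and the inner integral by their values at the endpoint $t = T$, so that $\norm{\vx_n(t) - \widehat{\vx}_n(t)} \le 2 \beta_n e^{L_\vf (1 + L_{\vpi}) T} \int_0^T \norm{\vsigma_{n-1}(\vz_n(s))}\,ds$ uniformly in $t \in [0,T]$. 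Since this bound no longer depends on $t$, integrating over $[0,T]$ simply multiplies by $T$, yielding exactly the claimed
\begin{align*}
    r_n(S) \le 2 \beta_n L_c (1 + L_{\vpi}) T e^{L_\vf (1 + L_{\vpi}) T} \int_0^T \norm{\vsigma_{n-1}(\vz_n(t))}\,dt.
\end{align*}

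The genuinely hard analytic work—controlling how the hallucinated and true trajectories diverge under mismatched dynamics via Grönwall's inequality—is already encapsulated in \cref{lemma: trajectories distance bound}, so given that result the present lemma is a short argument. The only conceptual subtlety to handle carefully is the optimism step: one must track that $\vf^* \in \setM_{n-1}$ holds only on the high-probability calibration event (which is where the ``with high probability'' qualifier enters and must be consistent across all episodes), and one must be careful to attribute each cost integral to the correct trajectory, since $\vf^*$ and $\vf_{n-1}$ generate different rollouts despite sharing the policy $\vpi_n$.
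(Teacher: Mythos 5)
Your proposal is correct and follows essentially the same route as the paper: the optimism inequality reduces the regret to $C(\vpi_n,\vf^*) - C(\vpi_n,\vf_{n-1})$, the closed-loop Lipschitz bound turns the cost difference into $L_c(1+L_{\vpi})\int_0^T\norm{\vx_n(t)-\widehat{\vx}_n(t)}\,dt$, and \cref{lemma: trajectories distance bound} together with the crude bounds $e^{L_{\vf}(1+L_{\vpi})t}\le e^{L_{\vf}(1+L_{\vpi})T}$ and $\int_0^t\le\int_0^T$ yields the factor $T$ and the final expression. Your remarks on feasibility of $(\vpi^*,\vf^*)$ and on attributing each cost integral to the correct rollout match what the paper does implicitly.
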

\begin{proof}
    By definition of $\vpi_n$ in \ocorl we have with high probability, $C(\vf^*, \vpi^*) \geq C(\vf_{n-1}, \vpi_n)$.
    Therefore, with high probability,
    \begin{align*}
        r_n(S) &= C(\vf^*, \vpi_n) - C(\vf^*, \vpi^*) \\
        &\le C(\vf^*, \vpi_n) - C(\vf_{n-1}, \vpi_n) \\
        &= \int_{0}^T c(\vz_n(t)) - c(\widehat{\vz}_n(t)) \,dt.
        \intertext{By \cref{lemma: closed-loop Lipschitz bound}, we further have that}
        r_n(S) &\le L_c (1 + L_{\vpi}) \int_0^T \norm{\vx_n(t) - \widehat{\vx}_n(t)} \,dt.
        \intertext{By \cref{lemma: trajectories distance bound} (with high probability), we further have that}
        r_n(S) &\le 2 \beta_{n} L_c (1 + L_{\vpi}) e^{L_\vf (1 + L_{\vpi}) T} \int_0^T \int_{0}^t \norm{\vsigma_{n-1}\left(\vz_n(s)\right)} \,ds \,dt \\
        &\le 2 \beta_{n} L_c (1 + L_{\vpi}) e^{L_\vf (1 + L_{\vpi}) T} \int_0^T \int_{0}^T \norm{\vsigma_{n-1}\left(\vz_n(s)\right)} \,ds \,dt \\
        &= 2 \beta_{n} L_c (1 + L_{\vpi}) T e^{L_\vf (1 + L_{\vpi}) T} \int_{0}^T \norm{\vsigma_{n-1}\left(\vz_n(t)\right)} \,dt.
    \end{align*}
\end{proof}

Now we are ready to prove \cref{proposition: general regret bound}.

\begin{proof}[Proof of \cref{proposition: general regret bound}]
    Let us first bound $R_N^2(S)$.
    By the Cauchy-Schwarz inequality,
    \begin{align*}
        R_N^2(S) &\le N \sum_{n=1}^N r_n^2(S).
        \intertext{By \cref{lemma: instantaneous regret bound} (with high probability), we have that}
        R_N^2(S) &\le N 4 L_c^2 (1 + L_{\vpi})^2 T^2 e^{2 L_\vf (1 + L_{\vpi}) T} \sum_{n=1}^N \beta_{n}^2 \left(\int_0^T \norm{\vsigma_{n-1}\left(\vz_n(t)\right)} \,dt\right)^2 \\
        &\le N 4 \beta_N^2 L_c^2 (1 + L_{\vpi})^2 T^3 e^{2 L_\vf (1 + L_{\vpi}) T} \sum_{n=1}^N \int_0^T \norm{\vsigma_{n-1}\left(\vz_n(t)\right)}^2 \,dt.
    \end{align*}
    Taking the square root, we obtain
    \begin{align*}
        R_N(S) \le 2 \beta_{N} L_c (1 + L_{\vpi}) T^\frac{3}{2} e^{L_\vf (1 + L_{\vpi}) T} \sqrt{N \sum_{n=1}^N \int_0^T \norm{\vsigma_{n-1}\left(\vz_n(t)\right)}^2 \,dt}.
    \end{align*}
    The result follows by noting that \begin{align*}
        \sum_{n=1}^N \int_0^T \norm{\vsigma_{n-1}\left(\vz_n(t)\right)}^2 \,dt \leq \complexity_N(\vf^*, S).
    \end{align*}
\end{proof}

\subsection{Bounding Model Complexities}
\label{subsection: Bounding Model Complexities}

In this section, we derive the model complexity bounds for the presented MSSs.
Our bounds are based on the approximation of integrals using an upper Darboux sum.

\begin{fact}[Upper Darboux approximation]
\label{lemma: Upper Darboux approximation}
    Given any $a \leq b$, a function $f : \R \to \R$, and a partition $P = (a, \dots, b)$ of $[a,b]$ into $k$ sub-intervals, \begin{align}
        \int_a^b f(x) \,dx \leq \sum_{i=1}^k \Delta_i \max_{x \in P[i]} f(x)
    \end{align} where $P[i]$ denotes the $i$-th sub-interval and $\Delta_i$ denotes the length of $P[i]$.
\end{fact}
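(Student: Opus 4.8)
The plan is to establish this as a direct consequence of two elementary properties of the Riemann integral: its additivity over a partition, and its monotonicity. Implicit in the statement (and satisfied in every application in the paper, where the integrand $f$ is continuous, indeed Lipschitz) are the regularity assumptions that $f$ is Riemann integrable on $[a,b]$ and that the maximum $\max_{x \in P[i]} f(x)$ is attained on each closed sub-interval; I would state these at the outset so the subsequent steps are rigorous.

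First I would fix notation for the partition, writing $P = (a = x_0 < x_1 < \dots < x_k = b)$, so that the $i$-th sub-interval is $P[i] = [x_{i-1}, x_i]$ and its length is $\Delta_i = x_i - x_{i-1}$. Additivity of the integral over this partition then gives
\begin{align*}
    \int_a^b f(x) \,dx = \sum_{i=1}^k \int_{x_{i-1}}^{x_i} f(x) \,dx.
\end{align*}

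Next, on each sub-interval I would denote the attained maximum by $M_i \defeq \max_{x \in P[i]} f(x)$, so that $f(x) \le M_i$ pointwise for all $x \in P[i]$. By monotonicity of the integral, and since $M_i$ is constant in $x$,
\begin{align*}
    \int_{x_{i-1}}^{x_i} f(x) \,dx \le \int_{x_{i-1}}^{x_i} M_i \,dx = \Delta_i M_i = \Delta_i \max_{x \in P[i]} f(x).
\end{align*}
Summing this inequality over $i = 1, \dots, k$ and combining with the additivity identity above yields the claimed bound $\int_a^b f(x)\,dx \le \sum_{i=1}^k \Delta_i \max_{x \in P[i]} f(x)$.

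The only subtlety worth flagging — and the sole place where care is needed — is the existence of the per-interval maximum: the bound as written uses $\max$ rather than $\sup$, so strictly one should either assume $f$ is continuous (guaranteeing the max is attained on each compact $P[i]$) or replace the maximum by a supremum and recover the standard upper Darboux sum. Since the paper only invokes this fact for continuous integrands such as $\norm{\vsigma_{n-1}(\vz_n(t))}^2$, the continuity route applies directly and there is no real obstacle; the result is otherwise a textbook consequence of additivity and monotonicity.
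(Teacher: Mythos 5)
Your proof is correct: the paper states this result as a \emph{Fact} with no proof at all, implicitly invoking the standard upper-Darboux-sum bound, and your argument via additivity of the integral over the partition plus monotonicity (bounding $f$ pointwise by the constant $M_i$ on each sub-interval) is exactly the textbook justification being relied upon. Your flag about $\max$ versus $\sup$ is a sensible refinement rather than a gap, since every integrand to which the paper applies this fact, such as $t \mapsto \norm{\vsigma_{n-1}(\vz_n(t))}^2$, is continuous on the compact sub-intervals, so the maximum is attained.
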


We are now ready to bound the model complexities.

\begin{lemma}[Oracle model complexity]
\label{lemma: oracle model complexity}
    For any $N \geq 1$, \begin{align}
        \complexity_N(\vf^*, S^{\text{\tiny{ORA}}}) \le \max_{\substack{\vpi_1, \ldots, \vpi_N \\ \vpi_n \in \Pi}} T \sum_{n=1}^N\norm{\vsigma_{n-1}\left(\vz_n(t_{n, 1})\right)}^2.
    \end{align}
\end{lemma}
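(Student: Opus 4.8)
The plan is to observe that this is the simplest of the model-complexity bounds, obtained by invoking the Upper Darboux approximation (\cref{lemma: Upper Darboux approximation}) with the trivial one-element partition. Recall that by definition the model complexity is $\complexity_N(\vf^*, S) = \max_{\vpi_1, \ldots, \vpi_N} \sum_{n=1}^N \int_0^T \norm{\vsigma_{n-1}(\vz_n(t))}^2 \,dt$, and that the oracle MSS in episode $n$ measures only at the single time $t_{n,1} = \argmax_{0 \le t \le T} \norm{\vsigma_{n-1}(\vz_n(t))}^2$.

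First I would fix an arbitrary policy sequence $\vpi_1, \ldots, \vpi_N$ (with each $\vpi_n \in \Pi$), which in turn fixes the executed trajectories $\vz_n(\cdot)$, and work episode by episode. For each $n$, I would apply \cref{lemma: Upper Darboux approximation} to $f(t) = \norm{\vsigma_{n-1}(\vz_n(t))}^2$ on $[0,T]$ using the partition $P = (0, T)$ consisting of the single sub-interval $[0,T]$ of length $T$. This yields
\begin{align*}
    \int_0^T \norm{\vsigma_{n-1}(\vz_n(t))}^2 \,dt \le T \max_{0 \le t \le T} \norm{\vsigma_{n-1}(\vz_n(t))}^2 = T \norm{\vsigma_{n-1}(\vz_n(t_{n,1}))}^2,
\end{align*}
where the final equality is exactly the definition of $t_{n,1}$ as the maximizer of the integrand over the horizon.

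Next I would sum this inequality over $n = 1, \ldots, N$, obtaining $\sum_{n=1}^N \int_0^T \norm{\vsigma_{n-1}(\vz_n(t))}^2 \,dt \le T \sum_{n=1}^N \norm{\vsigma_{n-1}(\vz_n(t_{n,1}))}^2$, which holds for \emph{every} fixed admissible policy sequence. Since the inequality is valid pointwise in the policy argument, taking the maximum over all admissible $\vpi_1, \ldots, \vpi_N$ on both sides preserves it and gives precisely the claimed bound on $\complexity_N(\vf^*, S^{\text{\tiny{ORA}}})$.

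There is no real obstacle here; the statement is essentially a restatement of the bounding step already previewed in \cref{subsection: Measurement selection strategies}. The only point deserving care is that both sides carry the same outer maximization over policy sequences, so I would be careful to establish the inequality for a fixed sequence before maximizing, rather than trying to maximize term by term.
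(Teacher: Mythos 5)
Your proposal is correct and follows essentially the same route as the paper: apply the upper Darboux approximation with a single sub-interval of length $T$, identify the maximizer with $t_{n,1}$ by definition of the oracle MSS, and then sum over episodes before taking the outer maximum. The extra care you take in fixing the policy sequence before maximizing is implicit in the paper's proof but does not change the argument.
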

\begin{proof}
    \begin{align*}
        \complexity_N(\vf^*, S^{\text{\tiny{ORA}}}) &= \max_{\substack{\vpi_1, \ldots, \vpi_N \\ \vpi_n \in \Pi}} \sum_{n=1}^N \int_0^T \norm{\vsigma_{n-1}\left(\vz_n(t)\right)}^2 \,dt.
        \intertext{For each episode $n$, use an upper Darboux approximation (\cref{lemma: Upper Darboux approximation}) with $k = 1$ to obtain}
        &\leq \max_{\substack{\vpi_1, \ldots, \vpi_N \\ \vpi_n \in \Pi}} T \sum_{n=1}^N \max_{0 \leq t \leq T} \norm{\vsigma_{n-1}\left(\vz_n(t)\right)}^2 \\
        &= \max_{\substack{\vpi_1, \ldots, \vpi_N \\ \vpi_n \in \Pi}} T \sum_{n=1}^N \norm{\vsigma_{n-1}\left(\vz_n(t_{n,1})\right)}^2.
    \end{align*}
\end{proof}

\begin{lemma}[Equidistant model complexity]
\label{lemma: equidistant model complexity}
    For any $N \geq 1$, \begin{align}
        \complexity_N(\vf^*, S^{\text{\tiny{EQI}}}) \le \max_{\substack{\vpi_1, \ldots, \vpi_N \\ \vpi_n \in \Pi}} \sum_{n=1}^N \frac{T}{m_n} \sum_{i=1}^{m_n} \parentheses*{\norm{\vsigma_{n-1}\left(\vz_n(t_{n,i})\right)}^2 + \frac{T L_{\sigma^2}}{m_n}}.
    \end{align}
\end{lemma}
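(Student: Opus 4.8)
The plan is to reprise the template of the oracle bound (\cref{lemma: oracle model complexity}), but now with the finer equidistant partition and one additional Lipschitz correction that accounts for the fact that the equidistant sample times need not coincide with the maximizers of the uncertainty over each sub-interval. First I would start from the definition
\[
    \complexity_N(\vf^*, S^{\text{\tiny{EQI}}}) = \max_{\substack{\vpi_1, \ldots, \vpi_N \\ \vpi_n \in \Pi}} \sum_{n=1}^N \int_0^T \norm{\vsigma_{n-1}\left(\vz_n(t)\right)}^2 \,dt
\]
and handle each episode $n$ separately. For episode $n$ I partition $[0,T]$ into the $m_n$ equal sub-intervals $P[i] = [(i-1)T/m_n,\, iT/m_n]$, each of width $T/m_n$, chosen so that the equidistant measurement time $t_{n,i}$ lies in $P[i]$. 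Applying the upper Darboux approximation (\cref{lemma: Upper Darboux approximation}) with this partition bounds the episode integral by $\sum_{i=1}^{m_n} \frac{T}{m_n} \max_{t \in P[i]} \norm{\vsigma_{n-1}(\vz_n(t))}^2$, i.e.\ the same move as the oracle case but with $k = m_n$ instead of $k = 1$.

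The second step replaces each per-interval maximum by the value at the sample point $t_{n,i}$ plus a correction. Since $t_{n,i}$ and any $t \in P[i]$ lie in the same sub-interval, $\abs{t - t_{n,i}} \le T/m_n$; combining the spatial $L_{\vsigma}$-Lipschitz continuity of $\vsigma_{n-1}$ (\cref{assumption: Well Calibration Assumption}) with the bounded speed of $t \mapsto \vz_n(t)$ shows that $t \mapsto \norm{\vsigma_{n-1}(\vz_n(t))}^2$ is Lipschitz in time with constant $L_{\sigma^2}$. Hence $\max_{t \in P[i]} \norm{\vsigma_{n-1}(\vz_n(t))}^2 \le \norm{\vsigma_{n-1}(\vz_n(t_{n,i}))}^2 + L_{\sigma^2} T/m_n$. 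Substituting this into the Darboux bound, summing over $i$ and $n$, and taking the maximum over policy sequences yields exactly the claimed inequality.

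The main obstacle is precisely this second step: pinning down the time-Lipschitz constant $L_{\sigma^2}$. One must verify that $\vz_n(\cdot)$ has bounded velocity (which follows from the Lipschitz dynamics and policy together with compactness of $\setX \times \setU$, so that $\vf^*$ is bounded along the trajectory), and that squaring the norm of the bounded, $L_{\vsigma}$-Lipschitz map $\vsigma_{n-1}$ preserves Lipschitzness; these two facts compose to give a clean constant $L_{\sigma^2}$ for $t \mapsto \norm{\vsigma_{n-1}(\vz_n(t))}^2$. Everything else is a direct and routine extension of the oracle argument, so I would keep the presentation short once the Lipschitz-in-time property is established.
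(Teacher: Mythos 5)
Your proposal is correct and follows essentially the same route as the paper's proof: the upper Darboux approximation over the equidistant partition into $m_n$ sub-intervals of width $T/m_n$, followed by replacing each per-interval maximum with the value at $t_{n,i}$ plus the Lipschitz correction $L_{\sigma^2} T/m_n$. Your extra care in justifying that $t \mapsto \norm{\vsigma_{n-1}(\vz_n(t))}^2$ is Lipschitz in time (via bounded trajectory velocity and boundedness of $\vsigma_{n-1}$) is a detail the paper glosses over by absorbing it into the definition of $L_{\sigma^2}$, but it does not change the argument.
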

\begin{proof}
    \begin{align*}
        \complexity_N(\vf^*, S^{\text{\tiny{EQI}}}) &= \max_{\substack{\vpi_1, \ldots, \vpi_N \\ \vpi_n \in \Pi}} \sum_{n=1}^N \int_0^T \norm{\vsigma_{n-1}\left(\vz_n(t)\right)}^2 \,dt.
        \intertext{For each episode $n$, let $P_n \defeq (0, \Delta_n, 2 \Delta_n, \dots, T)$ with $\Delta_n \defeq \frac{T}{m_n}$ be a partition of $[0, T]$ into $m_n$ sub-intervals, each of length $\Delta_n$.
        Using the upper Darboux approximations (\cref{lemma: Upper Darboux approximation}) of $\norm{\vsigma_{n-1}(\cdot)}^2$ with respect to $P_n$, we have}
        &\leq \max_{\substack{\vpi_1, \ldots, \vpi_N \\ \vpi_n \in \Pi}} \sum_{n=1}^N \Delta_n \sum_{i=1}^{m_n} \max_{t \in [(i-1) \Delta_n, i \Delta_n]} \norm{\vsigma_{n-1}\left(\vz_n(t)\right)}^2.
        \intertext{Observe that $t_{n,i} \in [(i-1) \Delta_n, i \Delta_n]$, and hence, using that $\norm{\vsigma_{n-1}(\cdot)}^2$ is $L_{\sigma^2}$-Lipschitz continuous,}
        &\leq \max_{\substack{\vpi_1, \ldots, \vpi_N \\ \vpi_n \in \Pi}} \sum_{n=1}^N \Delta_n \sum_{i=1}^{m_n} \parentheses*{\norm{\vsigma_{n-1}\left(\vz_n(t_{n,i})\right)}^2 + L_{\sigma^2} \Delta_n}.
    \end{align*}
\end{proof}

In the remainder of this section, we study the model complexity of the adaptive MSS.
In each episode $n$, we partition $[0,T]$ into equally sized sub-intervals (called ``buckets'') of length $\Delta_n$, where $\Delta_n$ is the solution to \begin{align}
    \Delta_n = \frac{1}{2 \Gamma_n(\Delta_n)} \label{eq:aperiodic_delta}
\end{align} where $\Gamma_n(\Delta_n) \defeq 2 \beta_{n} L_{\vsigma} (1 + L_{\vpi}) e^{L_{\vf} (1 + L_{\vpi}) \Delta_n}$.
As $\Gamma_n$ is a monotonically increasing function of $\Delta_n$ it is clear that a suitable $\Delta_n$ exists.
Moreover, it follows from $\Delta_n \leq T$ that $\Gamma_n \in \mathcal{O}(\beta_n)$, and hence that $m_n = T / \Delta_n \in \mathcal{O}(\beta_n)$.
Without loss of generality, we assume $m_n \in \N$.

The definition of the adaptive MSS can be interpreted as follows:
In every bucket, we take two measurements.
One measurement is saved and used later for updating the statistical model.
The other measurement is taken at the beginning of every bucket, and used to inform \ocorl of the current state position.
For the purposes of our theoretical analysis, these initial measurements are not saved and not used for the construction of the statistical model.

\begin{lemma}
    \label{lemma: aperiodic max helper}
    For any episode $n$, bucket $i \in \{1, \dots, m_n\}$, and $t \in [(i-1) \Delta_n, i \Delta_n]$ where the real and hallucinated trajectories are ``synced'' initially, i.e., $\widehat{\vz}_n((i-1) \Delta_n) = \vz_n((i-1) \Delta_n)$, we have with high probability that \begin{align}
        \norm{\vsigma_{n-1}(\widehat{\vz}_n(t)) - \vsigma_{n-1}(\vz_n(t))} &\leq \Gamma_n \int_{(i-1) \Delta_n}^{t} \norm{\vsigma_{n-1}(\vz_n(s))} \,ds \qquad\text{and} \\
        \norm{\vsigma_{n-1}(\widehat{\vz}_n(t)) - \vsigma_{n-1}(\vz_n(t))} &\leq \Gamma_n \int_{(i-1) \Delta_n}^{t} \norm{\vsigma_{n-1}(\widehat{\vz}_n(s))} \,ds
    \end{align}
\end{lemma}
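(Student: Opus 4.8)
The plan is to reduce the difference of standard deviations to the distance between the true and hallucinated \emph{states}, and then reuse the trajectory-distance estimate of \cref{lemma: trajectories distance bound} restricted to the single bucket $[(i-1)\Delta_n, i\Delta_n]$. First I would exploit the $L_{\vsigma}$-Lipschitz continuity of $\vsigma_{n-1}$ from \cref{assumption: Well Calibration Assumption} to write
\[
    \norm{\vsigma_{n-1}(\widehat{\vz}_n(t)) - \vsigma_{n-1}(\vz_n(t))} \le L_{\vsigma} \norm{\widehat{\vz}_n(t) - \vz_n(t)}.
\]
Since $\vz_n(t) = (\vx_n(t), \vpi_n(\vx_n(t)))$ and $\widehat{\vz}_n(t) = (\widehat{\vx}_n(t), \vpi_n(\widehat{\vx}_n(t)))$ differ only through the state argument fed into the $L_{\vpi}$-Lipschitz policy, the closed-loop bound of \cref{lemma: closed-loop Lipschitz bound} yields $\norm{\widehat{\vz}_n(t) - \vz_n(t)} \le (1 + L_{\vpi}) \norm{\widehat{\vx}_n(t) - \vx_n(t)}$.

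Next I would control $\norm{\widehat{\vx}_n(t) - \vx_n(t)}$. The crucial observation is that within the bucket the two trajectories start from a common point, $\widehat{\vx}_n((i-1)\Delta_n) = \vx_n((i-1)\Delta_n)$, so I can re-run the argument behind \cref{lemma: trajectories distance bound} verbatim, but with the lower integration limit shifted from $0$ to $(i-1)\Delta_n$. This gives, with high probability,
\[
    \norm{\widehat{\vx}_n(t) - \vx_n(t)} \le 2 \beta_{n} e^{L_{\vf} (1 + L_{\vpi}) (t - (i-1)\Delta_n)} \int_{(i-1)\Delta_n}^{t} \norm{\vsigma_{n-1}(\vz_n(s))} \,ds,
\]
and bounding the exponent via $t - (i-1)\Delta_n \le \Delta_n$ replaces the prefactor by $e^{L_{\vf}(1+L_{\vpi})\Delta_n}$.

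Chaining the three estimates and collecting constants produces exactly the factor $\Gamma_n = 2\beta_n L_{\vsigma}(1+L_{\vpi})e^{L_{\vf}(1+L_{\vpi})\Delta_n}$, which is the first inequality. The second inequality follows identically after invoking the alternative form \cref{eq:trajectories_dist_bound2} of the trajectory-distance bound, whose integrand is $\norm{\vsigma_{n-1}(\widehat{\vz}_n(s))}$ rather than $\norm{\vsigma_{n-1}(\vz_n(s))}$. The only point needing genuine care is verifying that the derivation of \cref{lemma: trajectories distance bound} survives the shift of the initial time to the start of the bucket: because the closed-loop dynamics of $\vx_n$ and $\widehat{\vx}_n$ are autonomous and the synced initial condition makes the boundary term vanish, the comparison inequality and the subsequent application of Grönwall's inequality (\cref{lemma: Grönwall's inequality}) go through unchanged over $[(i-1)\Delta_n, t]$. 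I expect this shift to be the main (though essentially routine) obstacle.
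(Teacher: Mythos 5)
Your proposal is correct and follows essentially the same route as the paper: first reduce the deviation of $\vsigma_{n-1}$ to $L_{\vsigma}(1+L_{\vpi})\norm{\widehat{\vx}_n(t)-\vx_n(t)}$ via Lipschitz continuity and \cref{lemma: closed-loop Lipschitz bound}, then invoke \cref{lemma: trajectories distance bound} with the lower integration limit shifted to $(i-1)\Delta_n$ thanks to the synced initial condition, bounding the exponential by $e^{L_{\vf}(1+L_{\vpi})\Delta_n}$ to recover $\Gamma_n$. The paper's proof is just a more compact statement of exactly this argument, including the same remark that the shift of the initial time is what tightens the integral bounds.
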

\begin{proof}
    Using that $\vsigma_{n-1}$ is $L_{\vsigma}$-Lipschitz continuous and applying \cref{lemma: closed-loop Lipschitz bound}, \begin{align*}
        \norm{\vsigma_{n-1}(\widehat{\vz}_n(t)) - \vsigma_{n-1}(\vz_n(t))} \leq L_{\vsigma} (1 + L_{\vpi}) \norm{\widehat{\vx}_n(t) - \vx_n(t)}.
    \end{align*}
    The result then follows from \cref{lemma: trajectories distance bound}, where the lower integral bounds can be tightened to $(i-1) \Delta_n$ as the real and hallucinated trajectories are ``synced'' in the beginning of the bucket.
\end{proof}

\begin{lemma}
\label{lemma: aperiodic max bound}
    For any episode $n$ and $i \in \{1, \dots, m_n\}$, if $t_{n,i}$ is selected according to the adaptive MSS then with high probability, \begin{align}
        \max_{t \in [(i-1) \Delta_n, i \Delta_n]} \norm{\vsigma_{n-1}(\vz_n(t))}^2 \leq 9 \norm{\vsigma_{n-1}(\vz_n(t_{n,i}))}^2.
    \end{align}
\end{lemma}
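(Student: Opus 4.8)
The plan is to transfer the maximum of the uncertainty along the \emph{true} trajectory to its value at the selected time $t_{n,i}$ by routing the argument through the \emph{hallucinated} trajectory, on which $t_{n,i}$ is by construction the maximizer. Fix the episode $n$ and bucket $i$, and abbreviate the within-bucket uncertainties by $g(t) \defeq \norm{\vsigma_{n-1}(\vz_n(t))}$ and $\widehat{g}(t) \defeq \norm{\vsigma_{n-1}(\widehat{\vz}_n(t))}$ for $t \in [(i-1)\Delta_n, i\Delta_n]$, writing $g_{\max}$ and $\widehat{g}_{\max}$ for their maxima over the bucket. Since the hallucinated trajectory in the definition of $t_{n,i}$ is initialized at the start of the bucket, $t_{n,i}$ maximizes $\widehat{g}$ over the bucket, so $\widehat{g}(t_{n,i}) = \widehat{g}_{\max}$; the claim to prove is then $g_{\max}^2 \le 9\,g(t_{n,i})^2$.

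First I would turn \cref{lemma: aperiodic max helper} into a uniform pointwise bound on $|\widehat{g}(t) - g(t)|$. By the reverse triangle inequality, $|\widehat{g}(t) - g(t)| \le \norm{\vsigma_{n-1}(\widehat{\vz}_n(t)) - \vsigma_{n-1}(\vz_n(t))}$, and the \emph{second} inequality of \cref{lemma: aperiodic max helper} bounds the right-hand side by $\Gamma_n \int_{(i-1)\Delta_n}^{t} \widehat{g}(s)\,ds$. Bounding the integrand by $\widehat{g}_{\max}$ and the interval length by $\Delta_n$, and using the defining relation $\Gamma_n \Delta_n = \tfrac{1}{2}$ that follows from \cref{eq:aperiodic_delta}, yields the key estimate $|\widehat{g}(t) - g(t)| \le \tfrac{1}{2}\widehat{g}_{\max}$ for every $t$ in the bucket. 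Here the synchronization $\widehat{\vz}_n((i-1)\Delta_n) = \vz_n((i-1)\Delta_n)$ enforced by the adaptive MSS is exactly what permits applying the helper lemma with the tightened lower integration limit $(i-1)\Delta_n$.

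With this estimate in hand the rest is two triangle-inequality steps sharing $\widehat{g}_{\max}$ as reference. For arbitrary $t$ in the bucket, $g(t) \le \widehat{g}(t) + \tfrac{1}{2}\widehat{g}_{\max} \le \tfrac{3}{2}\widehat{g}_{\max}$, hence $g_{\max} \le \tfrac{3}{2}\widehat{g}_{\max}$. Evaluating the same estimate at $t_{n,i}$ gives $\widehat{g}_{\max} = \widehat{g}(t_{n,i}) \le g(t_{n,i}) + \tfrac{1}{2}\widehat{g}_{\max}$, so $\widehat{g}_{\max} \le 2\,g(t_{n,i})$. Chaining the two bounds gives $g_{\max} \le 3\,g(t_{n,i})$, and squaring yields the claimed factor of $9$.

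The one place where the constant is genuinely at stake is the choice to use the hallucinated-trajectory version of \cref{lemma: aperiodic max helper} (with $\widehat{g}$ in the integrand) in \emph{both} steps, so that $\widehat{g}_{\max}$ serves as the common reference quantity; this is what makes the two triangle inequalities compatible and produces the factor $3$. A naive alternative that bounds the perturbation sometimes by $\tfrac{1}{2}g_{\max}$ (via the first helper inequality) and sometimes by $\tfrac{1}{2}\widehat{g}_{\max}$ only delivers $g_{\max} \le 4\,g(t_{n,i})$, i.e.\ a weaker constant of $16$. Everything else is routine, the bucket length entering solely through $\Gamma_n\Delta_n = \tfrac{1}{2}$.
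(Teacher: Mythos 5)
Your proof is correct and follows essentially the same route as the paper: both arguments use the second inequality of \cref{lemma: aperiodic max helper} together with $\Gamma_n\Delta_n = \tfrac{1}{2}$ to bound the true uncertainty above by $\tfrac{3}{2}$ times, and below by $\tfrac{1}{2}$ times, the hallucinated uncertainty at $t_{n,i}$, yielding the ratio $3$ and hence the factor $9$ after squaring. The only cosmetic difference is that you package the perturbation as a uniform bound $|\widehat{g}(t)-g(t)|\le\tfrac{1}{2}\widehat{g}_{\max}$ via the reverse triangle inequality, whereas the paper applies the two directed inequalities separately.
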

\begin{proof}
    By applying \Cref{lemma: aperiodic max helper} we have (with high probability) for every $t \in [(i-1) \Delta_n, \Delta_n]$, \begin{align}
        \norm{\vsigma_{n-1}\left(\vz_n(t)\right)} &\le \norm{\vsigma_{n-1}\left(\widehat{\vz}_n(t)\right)} + \Gamma_n \int_{(i-1) \Delta_n}^{t} \norm{\vsigma_{n-1}\left(\widehat{\vz}_n(s)\right)} \,ds. \nonumber
        \intertext{Note that $\norm{\vsigma_{n-1}\left(\widehat{\vz}_n(t)\right)} \leq \norm{\vsigma_{n-1}\left(\widehat{\vz}_n(t_{n,i})\right)}$ by the definition of $t_{n,i}$, and hence,}
        \norm{\vsigma_{n-1}\left(\vz_n(t)\right)} &\le \norm{\vsigma_{n-1}\left(\widehat{\vz}_n(t_{n,i})\right)} + \Gamma_n \Delta_n \norm{\vsigma_{n-1}\left(\widehat{\vz}_n(t_{n,i})\right)} \nonumber \\
        &= \left(1 + \frac{1}{2}\right) \norm{\vsigma_{n-1}\left(\widehat{\vz}_n(t_{n,i})\right)}. \label{eq:aperiodic_max_bound_helper1}
    \end{align}
    Similarly, by applying \Cref{lemma: aperiodic max helper}, we have (with high probability) that
    \begin{align}
        \norm{\vsigma_{n-1}\left(\vz_n(t_{n,i})\right)} &\geq \norm{\vsigma_{n-1}\left(\widehat{\vz}_n(t_{n,i})\right)} - \Gamma_n \int_{(i-1) \Delta_n}^{t_{n,i}} \norm{\vsigma_{n-1}\left(\widehat{\vz}_n(s)\right)} \,ds \nonumber \\
        &\ge \norm{\vsigma_{n-1}\left(\widehat{\vz}_n(t_{n,i})\right)} - \Gamma_n \Delta_n \norm{\vsigma_{n-1}\left(\widehat{\vz}_n(t_{n,i})\right)} \nonumber \\
        &= \left(1 - \frac{1}{2}\right) \norm{\vsigma_{n-1}\left(\widehat{\vz}_n(t_{n,i})\right)}. \label{eq:aperiodic_max_bound_helper2}
    \end{align}
    Combining \cref{eq:aperiodic_max_bound_helper1,eq:aperiodic_max_bound_helper2}, we obtain
    \begin{align*}
        \norm{\vsigma_{n-1}\left(\vz_n(t)\right)}^2 &\le \left(\frac{1 + \frac{1}{2}}{1 - \frac{1}{2}}\right)^2 \norm{\vsigma_{n-1}\left(\vz_n(t_{n,i})\right)}^2 = 9 \norm{\vsigma_{n-1}\left(\vz_n(t_{n,i})\right)}^2.
    \end{align*}
\end{proof}

\begin{corollary}[Adaptive model complexity]
\label{lemma: aperiodic model complexity}
    For any $N \geq 1$, \begin{align}
        \complexity_N(\vf^*, S^{\text{\tiny{ADP}}}) \le \max_{\substack{\vpi_1, \ldots, \vpi_N \\ \vpi_n \in \Pi}} 9 \sum_{n=1}^N \frac{T}{m_n} \sum_{i=1}^{m_n} \norm{\vsigma_{n-1}\left(\vz_n(t_{n,i})\right)}^2.
    \end{align}
\end{corollary}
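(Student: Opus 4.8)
The plan is to reduce the claim to \cref{lemma: aperiodic max bound}, which already contains all the analytical work, via the same upper-Darboux strategy used for the oracle and equidistant strategies. First I would start from the definition of the model complexity,
\[
\complexity_N(\vf^*, S^{\text{\tiny{ADP}}}) = \max_{\substack{\vpi_1, \ldots, \vpi_N \\ \vpi_n \in \Pi}} \sum_{n=1}^N \int_0^T \norm{\vsigma_{n-1}\left(\vz_n(t)\right)}^2 \,dt,
\]
and, for each episode $n$, split the integral over $[0,T]$ along the adaptive buckets using the partition $P_n = (0, \Delta_n, 2\Delta_n, \dots, T)$ with $\Delta_n = T/m_n$ into $m_n$ sub-intervals of equal length.

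Next I would apply the upper Darboux approximation (\cref{lemma: Upper Darboux approximation}) of $\norm{\vsigma_{n-1}(\cdot)}^2$ with respect to $P_n$, bounding the episode integral by $\Delta_n$ times the sum over buckets of the per-bucket maxima $\max_{t \in [(i-1)\Delta_n, i\Delta_n]} \norm{\vsigma_{n-1}(\vz_n(t))}^2$. The crucial step is then to invoke \cref{lemma: aperiodic max bound}, which (with high probability) controls each per-bucket maximum by $9 \norm{\vsigma_{n-1}(\vz_n(t_{n,i}))}^2$, i.e.\ the squared uncertainty at the single point actually measured in that bucket. Substituting this in and recalling $\Delta_n = T/m_n$ yields exactly the claimed bound.

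Unlike the equidistant case, no additive $L_{\sigma^2}$ error term appears: here the Darboux maximum is replaced by the value at the measured point up to a constant \emph{multiplicative} factor rather than an additive Lipschitz correction. The main obstacle has therefore already been dispatched upstream in \cref{lemma: aperiodic max bound}, whose constant $9 = \big(\tfrac{1+1/2}{1-1/2}\big)^2$ traces back to the choice of $\Delta_n$ in \cref{eq:aperiodic_delta} forcing $\Gamma_n \Delta_n = \tfrac12$, together with the Grönwall trajectory-deviation bound of \cref{lemma: trajectories distance bound}. Given that lemma, the corollary itself is pure bookkeeping; the only point requiring care is that the per-bucket bounds hold simultaneously across all buckets and episodes, which is guaranteed since they all rest on the single all-time calibration event of \cref{definition: well-calibrated model}.
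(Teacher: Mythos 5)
Your proposal is correct and follows essentially the same route as the paper: the paper's proof is exactly the Darboux-partition argument of \cref{lemma: equidistant model complexity} with the per-bucket maximum controlled by \cref{lemma: aperiodic max bound} in the final step. Your remarks on the multiplicative (rather than additive Lipschitz) correction and on the single high-probability calibration event are accurate and consistent with the paper's treatment.
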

\begin{proof}
    The result follows analogously to the proof of \cref{lemma: equidistant model complexity}, using \cref{lemma: aperiodic max bound} in the last step.
\end{proof}

\subsection{Bounding Measurement Uncertainty of GPs}
\label{subsection: Bounding Measurement Uncertainty of GPs}

Our goal in this section is to prove \cref{theorem: sublinear regret for different MSSs}.

The informativeness of a set of sampling points $A \subset \setZ$ about a function $f \sim \mathcal{GP}(0, k)$ with the noisy observations $\vy_A = \vf_A + \veps_A$ can be measured by the \emph{information gain} (c.f., \cite{cover1999elements,srinivas2009gaussian}), \begin{align}
    \I{\vy_A}{f} \defeq \I{\vy_A}{\vf_A} = \H{\vy_A} - \H{\vy_A \mid \vf_A},
\end{align} which quantifies the reduction in entropy of $\vf_A$ when observing $\vy_A$.
Here, we write $\vf_A = [f(\vx)]_{\vx \in A}$ and $\veps_A \sim \normal{\vzero}{\sigma^2 \mI}$.
For a Gaussian, $\H{\normal{\vmu}{\mSigma}} = \frac{1}{2} \log \det(2 \pi e \mSigma)$, so that when $f$ is a Gaussian process, $\I{\vy_A}{f} = \frac{1}{2} \log \det(\mI + \sigma^{-2} \mK_{AA})$ where $\mK_{AA} = [k(\vx, \vxp)]_{\vx,\vxp \in A}$.
We denote the maximal information gain by observing $n$ points by \begin{align}
    \gamma_n \defeq \max_{\substack{A \subset \setZ \\ |A| = n}} \I{\vy_A}{f}. \label{eq:max_info_gain}
\end{align}
Clearly, $\gamma_n$ depends on the kernel $k$.
In \cref{table: gamma magnitude bounds for different kernels}, we state the magnitude of $\gamma_n$ for different kernels.
We take the magnitudes from Theorem 5 of \citet{srinivas2009gaussian} and Remark 2 of \cite{vakili2021information}.

\begin{table}[ht!]
\caption{Here we present different magnitudes of $\gamma_n$. The magnitudes hold under the assumption that $\setZ$ is compact. Here, $B_{\nu}$ is the modified Bessel function.}
\begin{center}
\begin{tabular}{@{}lll@{}}
\toprule
Kernel&$k(x, x')$ & $\gamma_n$ \\ \midrule
    Linear &$x^\top x'$   & $\mathcal{O}\left(d \log(n)\right)$                    \\
    RBF &$e^{-\frac{\norm{x - x'}^2}{2l^2}}$& $\mathcal{O}\left( \log^{d+1}(n)\right)$                    \\
    Matérn &$\frac{1}{\Gamma(\nu)2^{\nu - 1}}\left(\frac{\sqrt{2\nu}\norm{x-x'}}{l}\right)^{\nu}B_{\nu}\left(\frac{\sqrt{2\nu}\norm{x-x'}}{l}\right)$  & $\mathcal{O}\left(n^{\frac{d}{2\nu + d}}\log^{\frac{2\nu}{2\nu+d}}(n)\right)$                     \\ \bottomrule
\end{tabular}
\end{center}
\label{table: gamma magnitude bounds for different kernels}
\end{table}

The maximum information gain can be interpreted analogously in the setting where $f \in \setH_{k,B}$, see \cite{srinivas2009gaussian} and section 5.2 of \cite{kirschner2018information}.
The following result relates mutual information and epistemic uncertainty.

\begin{lemma}[Lemma 5.3 from \citet{srinivas2009gaussian}]
\label{lem:mi}
    For any $A \subset \setZ$, if $f \sim \mathcal{GP}(0, k)$ or $f \in \setH_{k,B}$, and if the observation noise is i.i.d. $\mathcal{N}(0; \sigma^2)$ then
    \begin{align}
        \I{\vy_A}{f} = \frac{1}{2} \sum_{i=1}^n \log\parentheses*{1 + \frac{\mathrm{Var}[f(\vx_i) \mid \vy(\vx_{1:i-1})]}{\sigma^2}}
    \end{align} where $\{\vx_1, \dots, \vx_n\} \defeq A$.
\end{lemma}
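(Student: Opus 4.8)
The plan is to prove the identity via the chain rule for entropy, handling the Gaussian process case directly and then reducing the RKHS case to the same determinant identity. The first step is to note that the observation $\vy_A = \vf_A + \veps_A$ depends on $f$ only through its values $\vf_A = [f(\vx_i)]_i$ on $A$, so that $\I{\vy_A}{f} = \I{\vy_A}{\vf_A}$; I would justify this by the conditional independence of $\vy_A$ and $f$ given $\vf_A$. I then expand the mutual information as $\I{\vy_A}{\vf_A} = \H{\vy_A} - \H{\vy_A}[\vf_A]$.

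The conditional term is immediate: since the noise is i.i.d.\ $\normal{\vzero}{\sigma^2\mI}$ and independent of $f$, conditioning on $\vf_A$ makes the coordinates $y_i = f(\vx_i) + \veps_i$ independent Gaussians of variance $\sigma^2$, so $\H{\vy_A}[\vf_A] = \frac{n}{2}\log(2\pi e\sigma^2)$. For the unconditional term, I would apply the chain rule of entropy, $\H{\vy_A} = \sum_{i=1}^n \H{y_i}[\vy(\vx_{1:i-1})]$, where in the GP case $(\vf_A, \vy_A)$ is jointly Gaussian so each conditional law of $y_i$ given $\vy(\vx_{1:i-1})$ is again Gaussian.

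The key computational step is the conditional variance decomposition. Because $\veps_i$ is independent of $f$ and of $\veps_{1:i-1}$ (hence of $\vy(\vx_{1:i-1})$), I get $\mathrm{Var}[y_i \mid \vy(\vx_{1:i-1})] = \mathrm{Var}[f(\vx_i) \mid \vy(\vx_{1:i-1})] + \sigma^2$, and therefore $\H{y_i}[\vy(\vx_{1:i-1})] = \frac{1}{2}\log(2\pi e(\mathrm{Var}[f(\vx_i)\mid\vy(\vx_{1:i-1})] + \sigma^2))$. Substituting both entropy expressions and cancelling the $\frac{1}{2}\log(2\pi e\sigma^2)$ terms collapses the sum into $\frac{1}{2}\sum_{i=1}^n \log(1 + \sigma^{-2}\mathrm{Var}[f(\vx_i)\mid\vy(\vx_{1:i-1})])$, which is the claim.

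The main obstacle is the RKHS case $f \in \setH_{k,B}$, where $f$ is not random and the differential-entropy argument above does not literally apply. Here I would instead take the information gain to be \emph{defined} (as in \citet{srinivas2009gaussian} and Section~5.2 of \citet{kirschner2018information}) through the surrogate Gaussian model, i.e.\ $\I{\vy_A}{f} \defeq \frac{1}{2}\log\det(\mI + \sigma^{-2}\mK_{AA})$, and then establish the purely algebraic determinant identity $\frac{1}{2}\log\det(\mI + \sigma^{-2}\mK_{AA}) = \frac{1}{2}\sum_{i=1}^n\log(1+\sigma^{-2}\mathrm{Var}[f(\vx_i)\mid\vy(\vx_{1:i-1})])$. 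This follows from the same bookkeeping reinterpreted as an $LDL^\top$ (Cholesky) factorization of $\mI + \sigma^{-2}\mK_{AA}$, whose $i$-th pivot is exactly $1 + \sigma^{-2}\mathrm{Var}[f(\vx_i)\mid\vy(\vx_{1:i-1})]$, since the Schur complement arising at step $i$ is precisely the kernel-ridge/GP posterior variance at $\vx_i$ given the first $i-1$ points; taking $\log\det$ as the sum of logs of the pivots yields the identity, so both cases share a single underlying computation.
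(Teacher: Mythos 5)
The paper does not prove this lemma itself; it imports it verbatim from \citet{srinivas2009gaussian}, so the only meaningful comparison is with the standard proof there. Your argument is correct and is essentially that proof: reduce $\Ism{\vy_A}{f}$ to $\Ism{\vy_A}{\vf_A}$ (which the paper in fact takes as the definition), compute $\Hsm{\vy_A}[\vf_A] = \tfrac{n}{2}\log(2\pi e\sigma^2)$ from the i.i.d.\ Gaussian noise, and telescope $\Hsm{\vy_A}$ via the entropy chain rule using $\mathrm{Var}[y_i \mid \vy(\vx_{1:i-1})] = \mathrm{Var}[f(\vx_i)\mid \vy(\vx_{1:i-1})] + \sigma^2$. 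Your treatment of the RKHS case is also the right one: there the information gain is defined through the surrogate Gaussian model as $\tfrac{1}{2}\log\det(\mI + \sigma^{-2}\mK_{AA})$, and your observation that the $i$-th Schur complement in the sequential (Cholesky) factorization equals $1 + \sigma^{-2}\sigma_{i-1}^2(\vx_i)$ gives exactly the claimed sum, consistent with how the paper and \citet{kirschner2018information} interpret $\gamma_n$ in that setting.
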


In our setting, denote by $A_N \defeq (S_1, \dots, S_N)$ the set of times of observations $\setD_{1:N}$ until episode $N$.
We define for every episode $n$, \begin{align}
    i_{n}^* \defeq \argmax_{i \in \{1, \dots, m_n\}} \norm{\vsigma_{n-1}(\vz_n(t_{n,i}))}_2^2.
\end{align}
We write $t_{n}^* \defeq t_{n,i_n^*}$ and $\tilde{A}_{N} \defeq (\{t_{1}^*\}, \dots, \{t_{N}^*\})$.
Note that $\tilde{A}_N[n] \subseteq A_N[n] \; (\forall n)$ and $\tilde{A}_N$ comprises exactly $N$ observations.

\begin{lemma}
\label{lemma: bound uncertainty by info gain}
    If $f_j \sim \mathcal{GP}(0, k)$ for all $j \in \{1, \dots, d_x\}$ or if $\vf \in \setH_{k,B}^{d_x}$, and if the observation noise is i.i.d. $\mathcal{N}(\vzero; \sigma^2 \mI)$ then \begin{align}
        \sum_{n=1}^N \frac{T}{m_n} \sum_{i=1}^{m_n} \norm{\vsigma_{n-1}(\vz_n(t_{n,i}))}_2^2 \leq 2 \bar{\sigma} T d_x \gamma_{N}
    \end{align} where $\bar{\sigma} = \frac{\sigma_{\max}^2}{\log(1 + \sigma^{-2} \sigma_{\max}^2)}$, $\sigma_{\max}^2 = \max_{\vz \in \setZ, j \in \{1, \dots, d_x\}} \sigma_{0,j}^2(\vz)$, and $\gamma_N$ is with respect to kernel $k$.
\end{lemma}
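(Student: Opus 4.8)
The plan is to collapse the weighted average of squared uncertainties within each episode to its per-episode maximum, convert squared standard deviations into logarithmic terms through an elementary inequality, and then recognize the resulting sum as the information gain along a carefully chosen sparse sequence of exactly $N$ points, which is in turn controlled by $\gamma_N$.

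First I would use the definition of $t_n^* = t_{n,i_n^*}$ as the time maximizing $\norm{\vsigma_{n-1}(\vz_n(\cdot))}_2^2$ among the $m_n$ measurement times of episode $n$. Since $\frac{T}{m_n}\sum_{i=1}^{m_n}(\cdot)$ is $T$ times an average over $m_n$ terms and an average never exceeds the maximum,
\begin{align*}
    \sum_{n=1}^N \frac{T}{m_n}\sum_{i=1}^{m_n}\norm{\vsigma_{n-1}(\vz_n(t_{n,i}))}_2^2 \le T\sum_{n=1}^N\norm{\vsigma_{n-1}(\vz_n(t_n^*))}_2^2 = T\sum_{j=1}^{d_x}\sum_{n=1}^N\sigma_{n-1,j}^2(\vz_n(t_n^*)),
\end{align*}
where in the last step I expand the Euclidean norm over the $d_x$ output coordinates. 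Next I would apply the elementary bound $s^2 \le \bar{\sigma}\log(1 + \sigma^{-2}s^2)$, valid for every $0 \le s^2 \le \sigma_{\max}^2$; this follows since $x \mapsto x/\log(1+x)$ is increasing, giving $x \le \frac{C}{\log(1+C)}\log(1+x)$ for $x \le C$ with the choices $x = s^2/\sigma^2$ and $C = \sigma_{\max}^2/\sigma^2$. The hypothesis $\sigma_{n-1,j}^2 \le \sigma_{\max}^2$ needed to apply this holds because the posterior variance never exceeds the prior variance $\sigma_{0,j}^2$. Applying it coordinatewise converts each $\sigma_{n-1,j}^2(\vz_n(t_n^*))$ into $\bar{\sigma}\log(1 + \sigma^{-2}\sigma_{n-1,j}^2(\vz_n(t_n^*)))$.

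The crucial step, which I expect to be the main obstacle, is to relate these log terms to $\gamma_N$, which only counts $N$ observation locations, even though each $\sigma_{n-1,j}^2$ conditions on the entire dataset $\setD_{1:n-1}$ of $m_1 + \dots + m_{n-1}$ points. The bridge is the monotonicity of GP posterior variance (\emph{information never hurts}): because the sparse set $\{\vz_1(t_1^*),\dots,\vz_{n-1}(t_{n-1}^*)\}$ associated with $\tilde{A}_{n-1}$ is contained in $\setD_{1:n-1}$, conditioning on the larger set can only shrink the variance, so $\sigma_{n-1,j}^2(\vz) \le \tilde{\sigma}_{n-1,j}^2(\vz)$, where $\tilde{\sigma}_{n-1,j}^2$ denotes the posterior variance conditioned only on $\tilde{A}_{n-1}$. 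Monotonicity of $\log$ then lets me replace the full-data uncertainties by the sparse ones throughout the sum.

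Finally, the sequential conditioning structure of $\sum_{n=1}^N\log(1+\sigma^{-2}\tilde{\sigma}_{n-1,j}^2(\vz_n(t_n^*)))$ matches exactly the form in \cref{lem:mi}, so this sum equals $2\,\I{\vy_{\tilde{A}_N}}{f_j}$. Since $\tilde{A}_N$ comprises exactly $N$ points, the definition of maximum information gain in \cref{eq:max_info_gain} gives $\I{\vy_{\tilde{A}_N}}{f_j} \le \gamma_N$. Summing over the $d_x$ coordinates and reinstating the prefactor $T\bar{\sigma}$ yields $2\bar{\sigma}Td_x\gamma_N$, as claimed. The argument is identical in the RKHS case $\vf \in \setH_{k,B}^{d_x}$, since the posterior variance formula, its monotonicity, and \cref{lem:mi} all hold there as well.
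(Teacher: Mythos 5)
Your proposal is correct and follows essentially the same route as the paper's proof: average-to-maximum reduction, coordinatewise expansion, the variance monotonicity argument via the sparse set $\tilde{A}_N$, the elementary bound $s^2 \le \bar{\sigma}\log(1+\sigma^{-2}s^2)$, and \cref{lem:mi} together with the definition of $\gamma_N$. The only cosmetic difference is that you apply the logarithmic inequality before invoking variance monotonicity (pushing the comparison inside the $\log$), whereas the paper does these two steps in the opposite order; both orderings are valid.
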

\begin{proof}
    We have \begin{align*}
        \sum_{n=1}^N \frac{T}{m_n} \sum_{i=1}^{m_n} \norm{\vsigma_{n-1}(\vz_n(t_{n,i}))}_2^2 &\leq T \sum_{n=1}^N \norm{\vsigma_{n-1}(\vz_n(t_{n}^*))}_2^2.
        \intertext{This inequality is tight when $m_n = 1 \; (\forall n)$. When more than one measurement is obtained per episode then the regret may be smaller. Expanding the squared Euclidean norm, yields}
        &= T \sum_{n=1}^N \sum_{j=1}^{d_x} \sigma_{n-1,j}^2(\vz_n(t_{n}^*)).
        \intertext{Write $\tilde{\sigma}_{n,j}^2(\vz) \defeq \mathrm{Var}[f_j(\vz) \mid \dot{\vy}_{\tilde{A}_N}]$ whereas $\sigma_{n,j}^2(\vz) = \mathrm{Var}[f_j(\vz) \mid \dot{\vy}_{A_N}]$.
        The variance is monotonically decreasing as one conditions on more observations, $\tilde{\sigma}_{n,j}^2(\vz) \geq \sigma_{n,j}^2(\vz) \; (\forall n, j, \vz)$, and hence,}
        &\leq T \sum_{n=1}^N \sum_{j=1}^{d_x} \tilde{\sigma}_{n-1,j}^2(\vz_n(t_{n}^*)).
        \intertext{Again, this bound is tight when only one measurement is obtained per episode, but may be loose otherwise.
        Lemma 15 of \cite{curi2020efficient} shows that $\tilde{\sigma}_{n,j}^2(\vz) \leq \bar{\sigma} \log(1 + \sigma^{-2} \tilde{\sigma}_{n,j}^2(\vz))$ for any $n,j,\vz$. Applying this inequality, we obtain}
        &\leq \bar{\sigma} T \sum_{j=1}^{d_x} \sum_{n=1}^N \log\parentheses*{1 + \frac{\tilde{\sigma}_{n-1,j}^2(\vz_n(t_{n}^*))}{\sigma^2}}.
        \intertext{By \cref{lem:mi},}
        &= 2 \bar{\sigma} T \sum_{j=1}^{d_x} \I{\dot{\vy}_{\tilde{A}_{N}}}{f_j} \\
        &\leq 2 \bar{\sigma} T d_x \gamma_{N}.
    \end{align*}
\end{proof}

\begin{proof}[Proof of \cref{theorem: sublinear regret for different MSSs}]
    We derive the regret bound for the adaptive MSS.
    The regret bounds for the other MSSs follow analogously.

    From \cref{lemma: aperiodic model complexity}, recall the model complexity bound \begin{align*}
        \complexity_N(\vf^*, S^{\text{\tiny{ADP}}}) &\le \max_{\substack{\vpi_1, \ldots, \vpi_N \\ \vpi_n \in \Pi}} 9 \sum_{n=1}^N \frac{T}{m_n} \sum_{i=1}^{m_n} \norm{\vsigma_{n-1}\left(\vz_n(t_{n,i})\right)}^2.
        \intertext{By \cref{lemma: bound uncertainty by info gain}, we can further bound this by}
        &\leq 18 \bar{\sigma} T d_x \gamma_{N}.
    \end{align*}
    Combining this bound with \cref{proposition: general regret bound}, with high probability the regret is bounded by \begin{align*}
        R_N(S^{\text{\tiny{ADP}}}) \le \mathcal{O}\parentheses*{\beta_{N} T^2 e^{L_\vf (1 + L_{\vpi}) T} \sqrt{N \gamma_N}}.
    \end{align*}
\end{proof}

\subsection{Useful Facts and Inequalities}
\label{subsection: Useful Facts and Inequalities}

\begin{fact}[Grönwall's inequality, theorem 1.1 in chapter 3 of \cite{hartman2002ordinary}]
\label{lemma: Grönwall's inequality}
    Let $g(t)$ be a non-negative continuous function on the interval $[a, b] \subset \R$, and let $C, K$ be a pair of non-negative constants.
    If the function $g$ satisfies
    \begin{align}
        g(t) \le C + K\int_{a}^t g(s) \,ds, \quad \forall t \in [a, b],
    \end{align}
    then we have
    \begin{align}
        g(t) \le Ce^{K(t - a)}, \quad \forall t \in [a, b].
    \end{align}
\end{fact}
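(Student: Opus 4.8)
The plan is to reduce the integral inequality to a differential one by means of an integrating factor. First I would introduce the right-hand side as an auxiliary function $G(t) \defeq C + K \int_a^t g(s)\,ds$, so that the hypothesis reads simply $g(t) \le G(t)$ for every $t \in [a,b]$. Because $g$ is assumed continuous on $[a,b]$, the fundamental theorem of calculus guarantees that $G$ is continuously differentiable, with $G(a) = C$ and $G'(t) = K\, g(t)$. Substituting the hypothesis $g(t) \le G(t)$ into this derivative then yields the key differential inequality $G'(t) \le K\, G(t)$ on $[a,b]$.

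Next I would linearize this inequality by multiplying through by the integrating factor $e^{-K(t-a)}$, which is strictly positive and hence preserves the direction of the inequality. This produces $\frac{d}{dt}\left(e^{-K(t-a)} G(t)\right) = e^{-K(t-a)}\left(G'(t) - K\, G(t)\right) \le 0$, so the map $t \mapsto e^{-K(t-a)} G(t)$ is non-increasing on $[a,b]$. Evaluating at the left endpoint $t = a$, where the factor equals $1$ and $G(a) = C$, I would conclude $e^{-K(t-a)} G(t) \le C$ for all $t \in [a,b]$, i.e.\ $G(t) \le C e^{K(t-a)}$. Chaining this with $g(t) \le G(t)$ delivers the claimed bound $g(t) \le C e^{K(t-a)}$.

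The argument is essentially routine, and the chief reason I prefer the integrating-factor route is that it involves no division and therefore sidesteps the only potential subtlety, namely the degenerate cases $C = 0$ and $K = 0$: one never needs $G$ to be strictly positive in order to separate variables. The single point that genuinely requires the stated hypotheses is the appeal to the fundamental theorem of calculus to differentiate $G$, which is precisely why continuity of $g$ is assumed; were one to relax this to mere integrability, $G$ would only be absolutely continuous and the same computation would hold for almost every $t$, leaving the monotonicity conclusion intact. As a self-contained alternative I could instead iterate the hypothesis, repeatedly substituting it into its own right-hand side to obtain the partial sums $C \sum_{j=0}^{m} \frac{(K(t-a))^j}{j!}$ together with a remainder that vanishes as $m \to \infty$ (using that $g$ is bounded on the compact interval $[a,b]$), thereby recovering $C e^{K(t-a)}$ as the limiting series; the integrating-factor proof is, however, shorter and cleaner.
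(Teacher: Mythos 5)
Your proof is correct, but there is nothing in the paper to compare it against: the paper records Gr\"onwall's inequality as a \emph{Fact}, citing Theorem 1.1 in Chapter 3 of Hartman (2002), and gives no proof, using it as a black box in the proof of \cref{lemma: trajectories distance bound}. Your integrating-factor argument is the standard textbook proof (essentially the one in the cited reference), and every step checks out: $G(t) \defeq C + K\int_a^t g(s)\,ds$ is $C^1$ by continuity of $g$, the passage from $g \le G$ to $G' = Kg \le KG$ correctly uses $K \ge 0$, and multiplying by the positive factor $e^{-K(t-a)}$ turns the differential inequality into monotonicity of $t \mapsto e^{-K(t-a)}G(t)$, which evaluated at $t=a$ gives the claim. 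Two observations worth noting: first, your proof never uses the non-negativity of $g$, so you have in fact established a marginally stronger statement than the one quoted (non-negativity matters only in other variants of the lemma, e.g.\ with non-constant $C$); second, your remark about avoiding division is well taken, since the alternative proof that divides by $G(t)$ and integrates $\frac{d}{dt}\log G(t) \le K$ requires $C > 0$ and then a limiting argument $C \downarrow 0$ to cover the degenerate case, whereas your route handles $C = 0$ and $K = 0$ uniformly. The iteration-based alternative you sketch at the end is also sound and is the one that generalizes to the integral form with non-constant coefficients, but for the constant-coefficient statement used here the integrating-factor proof is indeed the cleaner choice.
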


\begin{lemma}
\label{lemma: closed-loop Lipschitz bound}
    For any two metric spaces $(\setX, d_\setX)$ and $(\setY, d_\setY)$, and any two Lipschitz continuous functions $f : \setX \times \setX \to \setY$ and $g : \setX \to \setX$ with Lipschitz constants $L_f$ and $L_g$ respectively, we have that $f(\cdot, g(\cdot))$ is $L$-Lipschitz continuous with respect to $d_\setY$ with $L \defeq L_f (1 + L_g)$.
\end{lemma}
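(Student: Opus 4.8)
The plan is to prove the bound directly from the definitions by fixing two arbitrary points of $\setX$ and chaining the Lipschitz inequalities for $f$ and $g$. First I would make explicit the metric used on the product domain of $f$: I equip $\setX \times \setX$ with the sum (taxicab) metric $d_{\setX \times \setX}((a, b), (a', b')) \defeq d_\setX(a, a') + d_\setX(b, b')$, since it is precisely this choice that produces the additive constant $1 + L_g$, and the Lipschitz constant $L_f$ of $f$ is understood with respect to this induced product metric.

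Then, for any $x, x' \in \setX$, I would apply the $L_f$-Lipschitz continuity of $f$ to the two points $(x, g(x))$ and $(x', g(x'))$ of the product space, obtaining
\[
    d_\setY\!\left(f(x, g(x)), f(x', g(x'))\right) \le L_f\left(d_\setX(x, x') + d_\setX(g(x), g(x'))\right).
\]
Next I would invoke the $L_g$-Lipschitz continuity of $g$ to control the second summand, namely $d_\setX(g(x), g(x')) \le L_g\, d_\setX(x, x')$. Substituting this into the previous display and factoring out $d_\setX(x, x')$ gives the bound $L_f(1 + L_g)\, d_\setX(x, x')$, which is exactly the claimed Lipschitz constant $L \defeq L_f(1 + L_g)$. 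Since $x, x'$ were arbitrary, this establishes that $f(\cdot, g(\cdot))$ is $L$-Lipschitz.

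There is no substantive obstacle here; the computation is a two-line chaining of inequalities. The only point worth flagging is the convention for the metric on $\setX \times \setX$: with the max metric one would instead obtain $L_f \max\{1, L_g\}$, and with the Euclidean product metric $L_f \sqrt{1 + L_g^2}$. I would therefore state the sum-metric convention explicitly, as it is the one consistent with how this lemma is applied elsewhere (e.g., to the closed-loop map $\vz = (\vx, \vpi(\vx))$ in \cref{lemma: trajectories distance bound}), where the argument is $(\vx, \vpi(\vx))$ and the $1 + L_{\vpi}$ factor appears.
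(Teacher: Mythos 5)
Your proof is correct and yields exactly the stated constant; the only difference from the paper's proof is where the product-metric issue is handled. The paper inserts the intermediate point $f(x', g(x))$ and splits via the triangle inequality, $d_\setY(f(x,g(x)), f(x',g(x'))) \le d_\setY(f(x',g(x)), f(x,g(x))) + d_\setY(f(x',g(x)), f(x',g(x')))$, then bounds each term by $L_f d_\setX(x,x')$ and $L_f L_g d_\setX(x,x')$ respectively. This means the paper only ever uses Lipschitzness of $f$ in \emph{one argument at a time}, which follows with the same constant $L_f$ from joint Lipschitzness under any of the standard product metrics (sum, max, or Euclidean), so the paper never has to commit to a convention. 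You instead apply joint Lipschitzness of $f$ once, directly with respect to the taxicab metric on $\setX \times \setX$, which collapses the argument to a single chained inequality but makes the conclusion convention-dependent, as you correctly flag: under the max metric your route gives $L_f \max\{1, L_g\}$ and under the Euclidean product metric $L_f\sqrt{1 + L_g^2}$. Note that both of these are at most $L_f(1 + L_g)$, so the lemma's stated constant is valid under any of the three conventions — a fact your route obscures slightly but the paper's triangle-inequality decomposition makes automatic. Your explicit discussion of the convention is nonetheless a genuine service, since the paper's Assumption 1 only says ``with respect to the induced metric'' without pinning down the product structure, and your observation that the sum metric is the one consistent with the $1 + L_{\vpi}$ factors appearing downstream is accurate.
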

\begin{proof}
    Fix any $\vx, \vxp \in \setX$.
    By the triangle inequality,
    \begin{equation*}
        \begin{aligned}
            d_\setY\parentheses{f(\vx, g(\vx)), f(\vxp, g(\vxp))} &\le d_\setY\parentheses{f(\vxp, g(\vx)), f(\vx, g(\vx))} + d_\setY\parentheses{f(\vxp, g(\vx)), f(\vxp, g(\vxp))} \\
            &\le L_{f} d_\setX\parentheses{\vx, \vxp} + L_{f} d_\setX\parentheses{g(\vx), g(\vxp)} \\
            &\le L_{f} d_\setX\parentheses{\vx, \vxp} + L_{f} L_{g} d_\setX\parentheses{\vx, \vxp} \\
            &= L_{f} (1 + L_{g}) d_\setX\parentheses{\vx, \vxp}.
        \end{aligned}
    \end{equation*}
\end{proof}

\section{Solving Optimistic Optimal Control Problem}
\label{appendix: Solving Optimistic Optimal Control Problem}

The optimal control problem solved by \ocorl is constrained to $L_{\vf}$-Lipschitz continuous dynamics.
When the dynamics are modeled with a GP using a linear kernel, the control problem can easily be solved directly in weight-space.
Beyond this special case, it is commonly assumed that there exists an oracle which solves this optimization problem exactly (see, e.g., assumption 1 of \cite{kakade2020information}).

In this work, we do not explicitly address the problem of approximating the optimal control problem reasonably well.
\cite{kakade2020information} give a brief overview of gradient- and sampling-based methods which may be useful \citep{jacobson1970differential,todorov2005generalized,mordatch2012discovery,williams2017model,wagener2019online}.
They alternatively suggest to use Thompson sampling \citep{thompson1933likelihood,osband2014model}, that is, to sample $\vf_{n-1}$ from $\setM_{n-1}$ (which is straightforward when the dynamics are modeled by a GP, see \cite{williams2006gaussian}) and then to compute and execute the optimal policy $\vpi_n = \argmin_{\vpi \in \Pi} C(\vpi, \vf_{n-1})$ using a planning oracle.
\cite{kakade2020information} conjecture that corresponding regret bounds can be derived using standard techniques for analyzing Bayesian regret of Thomposon sampling \citep{russo2014learning,russo2016information}.

\subsection{Application to Arbitrary Discretizations}
\label{appendix: Application to Arbitrary Discretizations}

In this section, we briefly show that our guarantees carry over to discretized dynamics and costs for \emph{arbitrary} discretizations.
Importantly, in the discrete-time setting the lemma mirroring \cref{lemma: trajectories distance bound} does not require the assumption that the models $\vf_n$ are $L_{\vf}$-Lipschitz continuous, hence, simplifying the optimistic optimal control problem solved by \ocorl.
In the discrete-time setting, it is sufficient to assume that $\vf^*$ is $L_{\vf}$-Lipschitz continuous.

Consider the dynamical system \begin{align*}
  \vx_{k+1} = \vh^*(\vz_k, \tau_k), \quad\text{where}\quad \vz_k \defeq (\vx_k, \vpi(\vx_k))
\end{align*} with initial state $\vx_0 \in \setX$ where $\tau_k \in \R_{> 0}$ measures the ``time'' between control points $\vx_k$ and $\vx_{k+1}$.
We make the assumption that $\vh^*$ is $L_{\vh}$-Lipschitz continuous in all arguments.
The (discrete-time) control problem is \begin{align}
  \vpi^* &\defeq \argmin_{\vpi \in \Pi} C(\vpi, \vh^*) = \argmin_{\vpi \in \Pi} \sum_{k=1}^T c(\vz_k). \label{eq:optimal_control_on_true_system_discrete}
\end{align}

In many applications, the cost to be minimized is defined at discrete control points to begin with.
In this case, the continuous-time control of \cref{eq:optimal_control_on_true_system} can be approximated arbitrarily well with \cref{eq:optimal_control_on_true_system_discrete} by choosing a sufficiently dense discretization.
Commonly, the discretization is taken to be equidistant, i.e., $\tau_k \equiv \tau \; (\forall k)$, but we remark that our results hold more generally for any discretization.

\subsubsection{Bounding Regret with the Model Complexity}

In the following, we denote by $\vx_{n,k}$ the $k$-th state visited during episode $n$.

\begin{lemma}
  \label{lemma: discrete trajectories distance bound}
      Assuming $\vx_{n,0} = \widehat{\vx}_{n,0}$, the distance between the true and the hallucinated trajectory at any iteration $k \geq 0$ is bounded with high probability by \begin{align}
          \norm{\widehat{\vx}_{n,k} - \vx_{n,k}} \le 2 \beta_n (1 + L_{\vh}' + 2 \beta_n L_{\vsigma}')^{T-1} \sum_{l=0}^{k-1} \norm{\vsigma_{n-1}\left(\vz_{n,l}\right)}
      \end{align} where we write $L_{\vh}' \defeq L_{\vh}(1+L_{\vpi})$ and $L_{\vsigma}' \defeq L_{\vsigma}(1+L_{\vpi})$
  \end{lemma}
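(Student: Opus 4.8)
The plan is to mirror the proof of \cref{lemma: trajectories distance bound}, replacing Grönwall's inequality (\cref{lemma: Grönwall's inequality}) by its discrete counterpart: I would set up a one-step linear recurrence in the trajectory gap $d_k \defeq \norm{\widehat{\vx}_{n,k} - \vx_{n,k}}$ (with $d_0 = 0$) and then unroll it. The one structural change from the continuous-time argument is that I would expand the one-step model error around $\vh^*$ rather than around the hallucinated model $\vh_{n-1}$. Writing $\widehat{\vz}_{n,k} = (\widehat{\vx}_{n,k}, \vpi_n(\widehat{\vx}_{n,k}))$ and using the triangle inequality,
\[ d_{k+1} \le \norm{\vh_{n-1}(\widehat{\vz}_{n,k}, \tau_k) - \vh^*(\widehat{\vz}_{n,k}, \tau_k)} + \norm{\vh^*(\widehat{\vz}_{n,k}, \tau_k) - \vh^*(\vz_{n,k}, \tau_k)}. \]
It is precisely this choice of expansion point that lets the analysis avoid assuming $\vh_{n-1}$ is Lipschitz: Lipschitz continuity is now invoked only for $\vh^*$.

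Next I would bound the two terms. For the first term, both $\vh^*$ (with high probability, by \cref{definition: well-calibrated model}) and $\vh_{n-1}$ lie in $\setM_{n-1}$, so each coordinate is within $\beta_{n-1}\sigma_{n-1,j}$ of the shared mean $\mu_{n-1,j}$; aggregating the coordinates into the Euclidean norm and using monotonicity $\beta_{n-1} \le \beta_n$ gives $\le 2\beta_n \norm{\vsigma_{n-1}(\widehat{\vz}_{n,k})}$. For the second term, \cref{lemma: closed-loop Lipschitz bound} shows $\vh^*(\cdot, \vpi_n(\cdot))$ is $L_{\vh}' = L_{\vh}(1+L_{\vpi})$-Lipschitz, giving $\le L_{\vh}' d_k$. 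The remaining key step is that the model-error term is measured on the \emph{hallucinated} trajectory, whereas the target bound is stated in terms of $\vsigma_{n-1}$ on the \emph{true} trajectory. I would bridge this gap with the $L_{\vsigma}$-Lipschitz continuity of $\vsigma_{n-1}$ (\cref{assumption: Well Calibration Assumption}), composed with the policy through \cref{lemma: closed-loop Lipschitz bound}, to obtain $\norm{\vsigma_{n-1}(\widehat{\vz}_{n,k})} \le \norm{\vsigma_{n-1}(\vz_{n,k})} + L_{\vsigma}' d_k$ with $L_{\vsigma}' = L_{\vsigma}(1+L_{\vpi})$.

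Combining these estimates yields the linear recurrence
\[ d_{k+1} \le (L_{\vh}' + 2\beta_n L_{\vsigma}')\, d_k + 2\beta_n \norm{\vsigma_{n-1}(\vz_{n,k})}, \qquad d_0 = 0. \]
Writing $a \defeq L_{\vh}' + 2\beta_n L_{\vsigma}'$, a straightforward induction gives $d_k \le 2\beta_n \sum_{l=0}^{k-1} a^{\,k-1-l} \norm{\vsigma_{n-1}(\vz_{n,l})}$. Since $k \le T$, every exponent satisfies $k-1-l \le T-1$, and $a^{\,k-1-l} \le (1+a)^{T-1}$ holds uniformly; this uniform bound is the reason the statement carries $1+a$ in the base rather than $a$, as it absorbs both the $a < 1$ and $a \ge 1$ cases at once. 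Pulling the constant $(1+a)^{T-1}$ out of the sum then gives exactly the claimed bound.

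The main obstacle I anticipate is bookkeeping rather than conceptual: one must aggregate the coordinate-wise well-calibration bound into the vector norm with the correct constant, and --- more importantly --- ensure that the Lipschitz transfer from the hallucinated to the true trajectory contributes a term linear in $d_k$ (which can be folded into the recurrence coefficient $a$) rather than in $d_{k+1}$, so that no circular dependency is introduced and the recurrence stays genuinely first-order.
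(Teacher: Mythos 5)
Your proposal is correct and follows essentially the same route as the paper's own proof: the same triangle-inequality expansion around $\vh^*(\widehat{\vz}_{n,k},\tau_k)$ (so that only $\vh^*$ needs to be Lipschitz), the same Lipschitz transfer of $\vsigma_{n-1}$ from the hallucinated to the true trajectory yielding the recurrence coefficient $L_{\vh}' + 2\beta_n L_{\vsigma}'$, the same unrolled induction, and the same final uniform bound $a^{k-1-l} \le (1+a)^{T-1}$ for $k \le T$. No gaps.
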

  \begin{proof}[Proof (based on Lemma 4 of \cite{curi2020efficient})]
    We begin by showing by induction that for any $k \geq 0$, \begin{align}
      \norm{\widehat{\vx}_{n,k} - \vx_{n,k}} \leq 2 \beta_n \sum_{l=0}^{k-1} (L_{\vh}' + 2 \beta_n L_{\vsigma}')^{k-1-l} \norm{\vsigma_{n-1}\left(\vz_{n,l}\right)}. \label{eq:induction_helper}
    \end{align}
    The base case is implied trivially. For the induction step, assume that \cref{eq:induction_helper} holds at iteration $k$.
    We have
      \begin{align*}
          \norm{\widehat{\vx}_{n,k+1} - \vx_{n,k+1}} &= \norm{\vh_{n-1}(\widehat{\vz}_{n,k}, \tau_k) - \vh^*(\vz_{n,k}, \tau_k)} \\
          & \le \norm{\vh_{n-1}(\widehat{\vz}_{n,k}, \tau_k) - \vh^*(\vz_{n,k}, \tau_k)} \\
          & \le \begin{multlined}[t]
            \norm{\vh_{n-1}(\widehat{\vz}_{n,k}, \tau_k) - \vh^*(\widehat{\vz}_{n,k}, \tau_k)} + \norm{\vh^*(\widehat{\vz}_{n,k}, \tau_k) - \vh^*(\vz_{n,k}, \tau_k)}
          \end{multlined} \\
          &\le 2 \beta_{n} \norm{\vsigma_{n-1}\left(\widehat{\vz}_{n,k}\right)} + L_{\vh} (1 + L_{\vpi}) \norm{\vx_{n,k} - \widehat{\vx}_{n,k}}
          \intertext{where the final inequality follows from \cref{definition: well-calibrated model} (with high probability) and \cref{lemma: closed-loop Lipschitz bound}.}
          &= \begin{multlined}[t]
            2 \beta_{n} \norm{\vsigma_{n-1}\left(\vz_{n,k}\right) + \vsigma_{n-1}\left(\widehat{\vz}_{n,k}\right) - \vsigma_{n-1}\left(\vz_{n,k}\right)} \\ + L_{\vh} (1 + L_{\vpi}) \norm{\vx_{n,k} - \widehat{\vx}_{n,k}}
          \end{multlined}
          \intertext{Using that $\vsigma_{n-1}$ is $L_{\vsigma}$-Lipschitz continuous and applying \cref{lemma: closed-loop Lipschitz bound},}
          &\leq \begin{multlined}[t]
            2 \beta_{n} (\norm{\vsigma_{n-1}\left(\vz_{n,k}\right)} + L_{\vsigma} (1+L_{\vpi}) \norm{\widehat{\vx}_{n,k} - \vx_{n,k}}) \\ + L_{\vh} (1 + L_{\vpi}) \norm{\vx_{n,k} - \widehat{\vx}_{n,k}}
          \end{multlined} \\
          &= 2 \beta_{n} \norm{\vsigma_{n-1}\left(\vz_{n,k}\right)} + (L_{\vh} (1 + L_{\vpi}) + 2 \beta_n L_{\vsigma} (1+L_{\vpi})) \norm{\vx_{n,k} - \widehat{\vx}_{n,k}}
          \intertext{By the induction hypothesis,}
          &\le 2 \beta_n \sum_{l=0}^{k} (L_{\vh}' + 2 \beta_n L_{\vsigma}')^{k-l} \norm{\vsigma_{n-1}\left(\vz_{n,l}\right)}.
      \end{align*}
      Thus, \cref{eq:induction_helper} holds.
      Since $k \leq T$, we have \begin{align*}
        \norm{\widehat{\vx}_{n,k} - \vx_{n,k}} &\leq 2 \beta_n \sum_{l=0}^{k-1} (L_{\vh}' + 2 \beta_n L_{\vsigma}')^{k-1-l} \norm{\vsigma_{n-1}\left(\vz_{n,l}\right)} \\
        &\leq 2 \beta_n \sum_{l=0}^{k-1} (1 + L_{\vh}' + 2 \beta_n L_{\vsigma}')^{k-1-l} \norm{\vsigma_{n-1}\left(\vz_{n,l}\right)} \\
        &\leq 2 \beta_n (1 + L_{\vh}' + 2 \beta_n L_{\vsigma}')^{T-1} \sum_{l=0}^{k-1} \norm{\vsigma_{n-1}\left(\vz_{n,l}\right)}.
      \end{align*}

  \end{proof}

  \begin{lemma}
  \label{lemma: discrete instantaneous regret bound}
      For any MSS $S$, the regret of any episode $n$ is bounded with high probability by
      \begin{align}
          r_n(S) \le 2 \beta_n T (1 + L_{\vh}' + 2 \beta_n L_{\vsigma}')^{T-1} L_{c}' \sum_{k=1}^T \norm{\vsigma_{n-1}\left(\vz_{n,k}\right)}
      \end{align} where $L_{c}' \defeq L_{c}(1+L_{\vpi})$.
  \end{lemma}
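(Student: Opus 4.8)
The plan is to mirror the continuous-time argument of \cref{lemma: instantaneous regret bound}, replacing the time integral by a sum over control points and invoking the discrete trajectory bound of \cref{lemma: discrete trajectories distance bound} in place of its continuous counterpart.

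First I would exploit the optimism built into the policy selection of \ocorl. Since the pair $(\vpi_n, \vh_{n-1})$ minimizes $C(\vpi, \vh)$ over all $\vpi \in \Pi$ and all plausible $\vh \in \setM_{n-1}$, and since $\vh^*$ lies in $\setM_{n-1}$ with high probability, we obtain $C(\vh_{n-1}, \vpi_n) \le C(\vh^*, \vpi^*)$. Substituting this into the definition of the instantaneous regret gives, with high probability, $r_n(S) = C(\vh^*, \vpi_n) - C(\vh^*, \vpi^*) \le C(\vh^*, \vpi_n) - C(\vh_{n-1}, \vpi_n) = \sum_{k=1}^T \left(c(\vz_{n,k}) - c(\widehat{\vz}_{n,k})\right)$, where $\vz_{n,k}$ and $\widehat{\vz}_{n,k}$ are the state-action pairs at step $k$ along the real and hallucinated trajectories, respectively.

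Next I would apply the closed-loop Lipschitz bound of \cref{lemma: closed-loop Lipschitz bound} to the $L_c$-Lipschitz cost $c$, yielding $c(\vz_{n,k}) - c(\widehat{\vz}_{n,k}) \le L_c(1+L_{\vpi}) \norm{\vx_{n,k} - \widehat{\vx}_{n,k}} = L_c' \norm{\vx_{n,k} - \widehat{\vx}_{n,k}}$. Plugging in the discrete trajectory distance bound of \cref{lemma: discrete trajectories distance bound} for each $\norm{\vx_{n,k} - \widehat{\vx}_{n,k}}$ then produces a double sum $\sum_{k=1}^T \sum_{l=0}^{k-1} \norm{\vsigma_{n-1}(\vz_{n,l})}$, multiplied by the common factor $2 \beta_n (1 + L_{\vh}' + 2 \beta_n L_{\vsigma}')^{T-1} L_c'$ that no longer depends on $k$.

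The only remaining step is to collapse the double sum: bounding each inner sum $\sum_{l=0}^{k-1}$ by the full sum over all steps and noting there are $T$ outer terms gives $\sum_{k=1}^T \sum_{l=0}^{k-1} \norm{\vsigma_{n-1}(\vz_{n,l})} \le T \sum_{k=1}^T \norm{\vsigma_{n-1}(\vz_{n,k})}$, which supplies the factor of $T$ and the single sum appearing in the statement. I do not expect a genuine obstacle here; once the discrete distance bound is in hand the argument is routine. The one point to track carefully is the indexing convention — the trajectory is indexed from $0$ while the cost and the final bound are written as sums from $1$ to $T$ — so that the off-by-one between $\sum_{l=0}^{k-1}$ and $\sum_{k=1}^T$ is absorbed into the crude full-sum upper bound rather than asserted as an equality.
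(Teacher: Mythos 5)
Your proposal follows the paper's proof essentially verbatim: optimism gives $C(\vh_{n-1},\vpi_n)\le C(\vh^*,\vpi^*)$, the closed-loop Lipschitz bound converts the cost gap to state distances, \cref{lemma: discrete trajectories distance bound} yields the double sum, and the double sum is collapsed by a factor of $T$. You are in fact slightly more careful than the paper in the last step, which writes the collapse of $\sum_{k=1}^T\sum_{l=0}^{k-1}$ into $T\sum_{k=1}^T$ as an equality and glosses over the $l=0$ versus $k=T$ index mismatch that you correctly flag as needing the crude full-sum bound.
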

  \begin{proof}
      By definition of $\vpi_n$ in \ocorl we have with high probability, $C(\vf^*, \vpi^*) \geq C(\vf_{n-1}, \vpi_n)$.
      Therefore, with high probability,
      \begin{align*}
          r_n(S) &= C(\vf^*, \vpi_n) - C(\vf^*, \vpi^*) \\
          &\le C(\vf^*, \vpi_n) - C(\vf_{n-1}, \vpi_n) \\
          &= \sum_{k=0}^T c(\vz_{n,k}) - c(\widehat{\vz}_{n,k}).
          \intertext{By \cref{lemma: closed-loop Lipschitz bound}, we further have that}
          r_n(S) &\le L_{c}' \sum_{k=1}^T \norm{\vx_{n,k} - \widehat{\vx}_{n,k}}.
          \intertext{By \cref{lemma: discrete trajectories distance bound} (with high probability), we further have that}
          r_n(S) &\le 2 \beta_n (1 + L_{\vh}' + 2 \beta_n L_{\vsigma}')^{T-1} L_{c}' \sum_{k=1}^T \sum_{l=0}^{k-1} \norm{\vsigma_{n-1}\left(\vz_{n,l}\right)} \\
          &= 2 \beta_n T (1 + L_{\vh}' + 2 \beta_n L_{\vsigma}')^{T-1} L_{c}' \sum_{k=1}^T \norm{\vsigma_{n-1}\left(\vz_{n,k}\right)}.
      \end{align*}
  \end{proof}

  The above results allow us to prove a general regret bound which is analogous to \cref{proposition: general regret bound}.

\begin{lemma}
\label{proposition: discrete general regret bound}
    Let $S$ be any MSS. If we run \ocorl with arbitrary discretization $\vh^*$, we have with high probability,
    \begin{align}
        R_N(S) \le 2 \beta_{N} T^{\frac{3}{2}} (1 + L_{\vh}' + 2 \beta_N L_{\vsigma}')^{T-1} L_{c}' \sqrt{N\tilde{\complexity}_N(\vh^*, S)}
    \end{align} where we define the discrete-time model complexity, \begin{align}
        \tilde{\complexity}_N(\vh^*, S) \defeq \max_{\substack{\vpi_1, \ldots, \vpi_N \\ \vpi_n \in \Pi}} \sum_{n=1}^N \sum_{k=1}^T \norm{\vsigma_{n-1}\left(\vz_{n,k}\right)}^2.
   \end{align}
\end{lemma}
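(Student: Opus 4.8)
The plan is to mirror the proof of \cref{proposition: general regret bound}, replacing the continuous-time instantaneous regret bound with its discrete-time counterpart \cref{lemma: discrete instantaneous regret bound} and swapping the integral Cauchy--Schwarz steps for their summation analogues. First I would bound $R_N^2(S) \le N \sum_{n=1}^N r_n^2(S)$ by the Cauchy--Schwarz inequality, exactly as in the continuous case. Then I would substitute \cref{lemma: discrete instantaneous regret bound} for $r_n(S)$ and square, which produces a factor $\left(\sum_{k=1}^T \norm{\vsigma_{n-1}(\vz_{n,k})}\right)^2$ in each summand.

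The key step is converting this squared sum into a sum of squares. Whereas the continuous proof uses the integral inequality $\left(\int_0^T f\right)^2 \le T \int_0^T f^2$, here I would apply the discrete Cauchy--Schwarz inequality $\left(\sum_{k=1}^T a_k\right)^2 \le T \sum_{k=1}^T a_k^2$ with $a_k = \norm{\vsigma_{n-1}(\vz_{n,k})}$. This contributes one extra factor of $T$; combined with the explicit $T^2$ arising from squaring the factor $T$ in \cref{lemma: discrete instantaneous regret bound}, this yields $T^3$, which becomes $T^{3/2}$ after the final square root.

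Next I would bound the $n$-dependent prefactors uniformly. Since $\beta_n$ is monotonically increasing in $n$ (\cref{definition: well-calibrated model}), I can replace $\beta_n$ by $\beta_N$ throughout, and crucially also inside the growth factor $(1 + L_{\vh}' + 2 \beta_n L_{\vsigma}')^{T-1}$, which is increasing in $\beta_n$; this lets me pull the entire prefactor out of the episode sum. Finally I would take the square root and observe that $\sum_{n=1}^N \sum_{k=1}^T \norm{\vsigma_{n-1}(\vz_{n,k})}^2 \le \tilde{\complexity}_N(\vh^*, S)$ by the definition of the discrete-time model complexity, completing the bound.

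The main obstacle, if any, is the subtle difference from the continuous case in how the exponential-type growth factor depends on $n$: in \cref{proposition: general regret bound} the factor $e^{L_\vf(1+L_\vpi)T}$ is constant across episodes, whereas here $(1 + L_{\vh}' + 2 \beta_n L_{\vsigma}')^{T-1}$ varies with $n$ through $\beta_n$. I would need to explicitly invoke its monotonicity in $\beta_n$ to bound it uniformly by its value at $n = N$ before extracting it from the sum over episodes. Everything else is a routine transcription of the continuous-time argument, since the hard work of propagating trajectory errors has already been carried out in \cref{lemma: discrete trajectories distance bound} and \cref{lemma: discrete instantaneous regret bound}.
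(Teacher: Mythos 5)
Your proposal is correct and follows essentially the same route as the paper's proof: Cauchy--Schwarz over episodes, substitution of \cref{lemma: discrete instantaneous regret bound}, the discrete Cauchy--Schwarz step $\left(\sum_{k=1}^T a_k\right)^2 \le T \sum_{k=1}^T a_k^2$ to produce $T^3$, uniform bounding of the $\beta_n$-dependent prefactors by their values at $n=N$ (including inside the growth factor), and finally the model-complexity bound. Your explicit remark on the monotonicity of $(1 + L_{\vh}' + 2 \beta_n L_{\vsigma}')^{T-1}$ in $\beta_n$ is a step the paper performs silently, so it is a welcome clarification rather than a deviation.
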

\begin{proof}
    Let us first bound $R_N^2(S)$.
    By the Cauchy-Schwarz inequality,
    \begin{align*}
        R_N^2(S) &\le N \sum_{n=1}^N r_n^2(S).
        \intertext{By \cref{lemma: instantaneous regret bound} (with high probability), we have that}
        R_N^2(S) &\le N 4 T^2 L_{c}'^2 \sum_{n=1}^N \beta_{n}^2 (1 + L_{\vh}' + 2 \beta_n L_{\vsigma}')^{2(T-1)} \left(\sum_{k=1}^T \norm{\vsigma_{n-1}\left(\vz_{n,k}\right)}\right)^2 \\
        &\le N 4 \beta_N^2 T^3 (1 + L_{\vh}' + 2 \beta_N L_{\vsigma}')^{2(T-1)} L_{c}'^2 \sum_{n=1}^N \sum_{k=1}^T \norm{\vsigma_{n-1}\left(\vz_{n,k}\right)}^2.
    \end{align*}
    Taking the square root, we obtain
    \begin{align*}
        R_N(S) \le 2 \beta_{N} T^{\frac{3}{2}} (1 + L_{\vh}' + 2 \beta_N L_{\vsigma}')^{T-1} L_{c}' \sqrt{N \sum_{n=1}^N \sum_{k=1}^T \norm{\vsigma_{n-1}\left(\vz_{n,k}\right)}^2}.
    \end{align*}
    The result follows by noting that \begin{align*}
        \sum_{n=1}^N \sum_{k=1}^T \norm{\vsigma_{n-1}\left(\vz_{n,k}\right)}^2 \leq \tilde{\complexity}_N(\vh^*, S).
    \end{align*}
\end{proof}

\subsubsection{Bounding Model Complexities}

The model complexity bounds from the continuous-time setting (c.f., \cref{subsection: Bounding Model Complexities}) extend seamlessly to the discrete-time setting.
In the following we briefly sketch the bound for the oracle MSS.

\begin{lemma}[Discrete-time oracle model complexity]
    For any $N \geq 1$, \begin{align}
        \tilde{\complexity}_N(\vh^*, S^{\text{\tiny{ORA}}}) \le \max_{\substack{\vpi_1, \ldots, \vpi_N \\ \vpi_n \in \Pi}} T \sum_{n=1}^N\norm{\vsigma_{n-1}\left(\vz_n(t_{n, 1})\right)}^2.
    \end{align}
\end{lemma}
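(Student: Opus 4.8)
The plan is to mirror the proof of \cref{lemma: oracle model complexity} almost verbatim, replacing the continuous-time integral over $[0,T]$ by the finite sum over the $T$ discrete control points. First I would unfold the definition of the discrete-time model complexity,
\begin{align*}
    \tilde{\complexity}_N(\vh^*, S^{\text{\tiny{ORA}}}) = \max_{\substack{\vpi_1, \ldots, \vpi_N \\ \vpi_n \in \Pi}} \sum_{n=1}^N \sum_{k=1}^T \norm{\vsigma_{n-1}\left(\vz_{n,k}\right)}^2.
\end{align*}

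Then, for each episode $n$, I would bound the inner sum over the $T$ discrete steps by $T$ times its largest summand,
\begin{align*}
    \sum_{k=1}^T \norm{\vsigma_{n-1}\left(\vz_{n,k}\right)}^2 \le T \max_{1 \le k \le T} \norm{\vsigma_{n-1}\left(\vz_{n,k}\right)}^2.
\end{align*}
This is the discrete counterpart of the $k=1$ upper Darboux approximation (\cref{lemma: Upper Darboux approximation}) used in the continuous-time proof: there, one bounds the integral over the single sub-interval $[0,T]$ by its length $T$ times the supremum of the integrand; here a finite sum of $T$ terms is bounded by $T$ times the maximum term. Recalling that the oracle MSS by definition records only the single most uncertain control point of each episode, the maximizing index coincides with the oracle's chosen time $t_{n,1}$, so that $\max_{1 \le k \le T} \norm{\vsigma_{n-1}(\vz_{n,k})}^2 = \norm{\vsigma_{n-1}(\vz_n(t_{n,1}))}^2$. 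Substituting this equality and pulling the maximum over policy sequences back to the front yields the claimed bound.

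I do not expect any genuine obstacle here, since the argument reduces to a one-line bound-by-maximum. The only point requiring minor care is purely notational: aligning the discrete indexing $\vz_{n,k}$ with the continuous-style notation $\vz_n(t_{n,1})$ appearing in the statement, and confirming that the factor $T$ arises here as the number of discrete steps (which plays the role of the horizon length in the continuous bound). The remaining MSS complexity bounds (equidistant and adaptive) would then follow by the same transcription, replacing the Darboux-partition argument and \cref{lemma: aperiodic max bound} by their obvious finite-sum analogues, exactly as in the continuous-time proofs of \cref{subsection: Bounding Model Complexities}.
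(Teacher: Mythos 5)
Your proposal is correct and matches the paper's proof essentially verbatim: unfold the definition, bound the inner sum over the $T$ control points by $T$ times its maximum term, and identify that maximum with the oracle's chosen point (the paper writes this last step as a $\leq$ against the continuous-time maximizer $t_{n,1}$, which is the same bound). No issues.
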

\begin{proof}
    \begin{align*}
        \tilde{\complexity}_N(\vh^*, S^{\text{\tiny{ORA}}}) &= \max_{\substack{\vpi_1, \ldots, \vpi_N \\ \vpi_n \in \Pi}} \sum_{n=1}^N \sum_{k=1}^T \norm{\vsigma_{n-1}\left(\vz_{n,k}\right)}^2 \\
        &\leq \max_{\substack{\vpi_1, \ldots, \vpi_N \\ \vpi_n \in \Pi}} T \sum_{n=1}^N \max_{k \in \{1, \dots, T\}} \norm{\vsigma_{n-1}\left(\vz_{n,k}\right)}^2 \\
        &\leq \max_{\substack{\vpi_1, \ldots, \vpi_N \\ \vpi_n \in \Pi}} T \sum_{n=1}^N \norm{\vsigma_{n-1}\left(\vz_n(t_{n,1})\right)}^2.
    \end{align*}
\end{proof}

Note that the MSS $S$ can (but not has to) be restricted to the discretization of $[0,T]$ used for the dynamics and costs.
The derivations for the equidistant and adaptive MSSs follow analogously, where we note that the bucket length $\Delta_n$ is to be defined with respect to a modified constant $\Gamma_n$.

Finally, applying the measurement uncertainty bound presented in \cref{subsection: Bounding Measurement Uncertainty of GPs}, we obtain the following theorem.

\begin{theorem}
    Fix an arbitrary discretization $\vh^*$ and assume that $\vh^* \in \setH_{n, B}^{d_x}$, the observation noise is i.i.d. $\mathcal{N}(\vzero; \sigma^2 \mI)$, and let $\norm{\cdot}$ be the Euclidean norm.
    We model $\vh^*$ with the GP model.
    The regret for the adaptive MSS is with probability at least $1-\delta$ bounded by
    \begin{align*}
        R_N(S^{\text{\tiny{ADP}}}) &\leq \mathcal{O}\left(\beta_N T^2 (1 + L_{\vh}' + 2 \beta_N L_{\vsigma}')^{T-1} \sqrt{N\gamma_{N}}\right), && m_n^{\text{\tiny{ADP}}} = \mathcal{O}(\beta_n).
    \end{align*}
\end{theorem}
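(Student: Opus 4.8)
The plan is to mirror the continuous-time argument behind \cref{theorem: sublinear regret for different MSSs}, substituting the discrete-time version of each ingredient. Concretely, the proof rests on three pieces: the discrete-time general regret bound of \cref{proposition: discrete general regret bound}, a discrete-time adaptive model-complexity bound analogous to \cref{lemma: aperiodic model complexity}, and the information-gain bound of \cref{lemma: bound uncertainty by info gain}. First I would recall that \cref{proposition: discrete general regret bound} already reduces the regret to the discrete model complexity via
\begin{align*}
    R_N(S) \le 2 \beta_{N} T^{\frac{3}{2}} (1 + L_{\vh}' + 2 \beta_N L_{\vsigma}')^{T-1} L_{c}' \sqrt{N\tilde{\complexity}_N(\vh^*, S)},
\end{align*}
so that the whole task is to show $\tilde{\complexity}_N(\vh^*, S^{\text{\tiny{ADP}}}) \le \mathcal{O}(T \gamma_N)$.

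Next I would establish the discrete adaptive model-complexity bound. Partition the $T$ control points into buckets of $\Delta_n$ consecutive steps, resync the real and hallucinated trajectories at each bucket boundary, and pick within each bucket the step of largest hallucinated uncertainty. The key is a discrete analog of \cref{lemma: aperiodic max helper}: using the $L_{\vsigma}$-Lipschitzness of $\vsigma_{n-1}$, \cref{lemma: closed-loop Lipschitz bound}, and the bucket-restricted form of \cref{lemma: discrete trajectories distance bound}, I would bound $\norm{\vsigma_{n-1}(\widehat{\vz}_{n,k}) - \vsigma_{n-1}(\vz_{n,k})}$ within a bucket by $\Gamma_n \sum_{l} \norm{\vsigma_{n-1}(\vz_{n,l})}$ for a suitably modified constant $\Gamma_n$. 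Choosing $\Delta_n$ so that $\Gamma_n \Delta_n \le \frac{1}{2}$ then reproduces the factor-$9$ argument of \cref{lemma: aperiodic max bound}, giving
\begin{align*}
    \tilde{\complexity}_N(\vh^*, S^{\text{\tiny{ADP}}}) \le \max_{\substack{\vpi_1, \ldots, \vpi_N \\ \vpi_n \in \Pi}} 9 \sum_{n=1}^N \frac{T}{m_n} \sum_{i=1}^{m_n} \norm{\vsigma_{n-1}\left(\vz_n(t_{n,i})\right)}^2,
\end{align*}
exactly as in \cref{lemma: aperiodic model complexity}, together with $m_n = \mathcal{O}(\beta_n)$.

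Finally, I would apply \cref{lemma: bound uncertainty by info gain} verbatim: that lemma only concerns the GP posterior at the chosen observation points and is indifferent to whether the trajectory is indexed continuously or discretely, so the same information-theoretic argument yields $\sum_{n=1}^N \frac{T}{m_n} \sum_{i=1}^{m_n} \norm{\vsigma_{n-1}(\vz_n(t_{n,i}))}^2 \le 2 \bar{\sigma} T d_x \gamma_N$. Substituting $\tilde{\complexity}_N(\vh^*, S^{\text{\tiny{ADP}}}) \le \mathcal{O}(T \gamma_N)$ into the regret bound above, the factor $T^{3/2} \cdot \sqrt{T}$ collapses to $T^2$ and delivers the claimed $R_N(S^{\text{\tiny{ADP}}}) \le \mathcal{O}(\beta_N T^2 (1 + L_{\vh}' + 2 \beta_N L_{\vsigma}')^{T-1} \sqrt{N \gamma_N})$; the rate $\beta_N = \mathcal{O}(\sqrt{\gamma_N + \log(1/\delta)})$ from \cref{lemma:well_calibrated_GP_model} then makes the bound explicit.

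The main obstacle I anticipate is the second step. Unlike the continuous-time case, where Grönwall's inequality produces the clean factor $e^{L_{\vf}(1+L_{\vpi})\Delta_n}$ controllable by shrinking $\Delta_n$, the discrete trajectory-distance bound of \cref{lemma: discrete trajectories distance bound} carries the compounding factor $(1 + L_{\vh}' + 2\beta_n L_{\vsigma}')^{T-1}$. Restricting this bound to a single bucket (so the exponent becomes $\Delta_n - 1$ rather than $T-1$) and then defining $\Gamma_n$ and $\Delta_n$ so that $\Gamma_n \Delta_n \le \frac{1}{2}$ while still guaranteeing $m_n = T/\Delta_n = \mathcal{O}(\beta_n)$ requires care; this is the one place where the discrete analysis genuinely departs from the continuous template.
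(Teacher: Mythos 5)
Your proposal is correct and follows essentially the same route as the paper, which itself only sketches this result by combining the discrete-time general regret bound, the remark that the model-complexity bounds ``extend seamlessly'' with a modified constant $\Gamma_n$, and the GP measurement-uncertainty bound. The one delicate point you flag---restricting the compounding factor $(1 + L_{\vh}' + 2\beta_n L_{\vsigma}')^{T-1}$ to a single bucket and choosing $\Delta_n$ so that $\Gamma_n \Delta_n \le \tfrac{1}{2}$ while keeping $m_n = \mathcal{O}(\beta_n)$---is exactly the step the paper elides, and your treatment of it is more explicit than the paper's own.
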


\section{Experimental Setup}
\label{appendix: Experimental Setup}

\subsection{System's dynamics}
\label{subsection:Systems dynamics}

Here we either give reference to the equations of the dynamical system or provide their equations:
\begin{itemize}
    \item Cancer Treatment:
    \begin{align*}
        & \dot{x}(t) = rx(t)\log \left( \frac{1}{x(t)} \right) - \delta u(t)x(t) \\
        &x(0) = 0.975, r =0.3, \delta= 0.45 
    \end{align*}
    \item Glucose in Blood:
      \begin{align*}
      & \dot{x}_0(t) = -a x_0(t) - bx_1(t)\\
      & \dot{x}_1(t) = -c x_1(t) + u(t), \\
      & a=1, b=1, c=1, x(0)=(0.75, 0)
      \end{align*}
    \item Pendulum:
    \begin{align*}
    &\dot{x}_0(t) = x_1(t) \\
    &\dot{x}_1(t) = \frac{g}{l} \sin(x_0(t)) + u(t) \\
    & g = 9.81, l=5.0
    \end{align*}
    \item Mountain Car:
    \begin{align*}
    &\dot{x}_0(t) = 10 x_1(t) \\
    &\dot{x}_1(t) = 3.5 u(t) - 2.5 \cos(3 x_0(t)) \\
    \end{align*}
    \item Cart Pole:
    Equation 6.1 and 6.2 of \citet{kelly2017introduction}
    \item Furuta Pendulum:
    Equation 18 of \citet{gafvert2016modelling}
    \item Bicycle:
    Dynamics from \citep{bicycle}
    \item Quadrotor 2D:
    Equations 1, 2, 3 of \citet{paing2020new}
    \item Quadrotor 3D:
    Equations 12, 13, 14 of \citet{thomas2017autonomous}
\end{itemize}

\subsection{System's tasks}
\label{subsection:Systems tasks}
 In all considered settings the running cost has the following quadratic form:
\begin{align*}
    c(\vx, \vu) = (\vx - \vx_T)^\top\mQ(\vx - \vx_T) + (\vu - \vu_T)^\top\mR(\vu - \vu_T).
\end{align*}
\looseness-1
Here, $\vx_T$ is the target state, and $\vu_T$ is the target control. Descriptively, in cancer treatment and glucose in Blood systems, our objective is to reduce the prevalence of unhealthy cells, while striving to be as minimally invasive as possible. For the pendulum, Furuta pendulum, and cart pole systems our goal is to skillfully swing the pendulum upwards from its lowest point. In the mountain car system, the challenge is to guide the car from the valley's depth to the hill's summit. Finally, in systems bicycle, quadrotor 2D, and quadrotor 3D our objective is to adeptly navigate the agent to reach a desired target state.

\begin{table}[h]
\caption{\looseness-1 Cost specifications for considered systems.}
\label{table:cost_details}
\resizebox{\columnwidth}{!}{
\begin{tabular}{@{}lllll@{}}
\toprule
System           & Running $\mQ$ & Running $\mR$        & Target $\vx_T$                    & Target $\vu_T$    \\ \midrule
Cancer Treatment & $\mI_1$   & $\mI_1$          & $[0.54]$                        & $\vzero_1$      \\
Glucose in Blood         & $\mI_2$   & $\mI_1$          & $[0.47 , 0.33]$                 & $\vzero_1$      \\
Pendulum         & $\mI_2$   & $\mI_1$          & $[0, 0]$                    & $\vzero_1$      \\
Mountain Car     & $\mI_2$   & $\mI_1$          & $[\frac{\pi}{6}, 0]$          & $\vzero_1$      \\
Cart Pole        & $\mI_4$   & $\mI_1$          & $\vzero_4$                      & $\vzero_1$      \\
Furuta Pendulum  & $\mI_4$   & $\mI_1$          & $\vzero_4$                      & $\vzero_1$      \\
Bicycle          & $\mI_4$   & $\mI_2$          & $[2, 1,\frac{\pi}{6}, 0]$ & $\vzero_2$      \\
Quadrotor 2D     & $\mI_6$   & $10 \cdot \mI_2$ & $\vzero_4$                      & $[g\cdot m, 0]$ \\
Quadrotor 3D & \begin{tabular}[c]{@{}c@{}}
$\diag([1, 1, 1, 1,1, 1,0.1,$\\$ 0.1,0.1, 1, 0.1, 0.1])$
\end{tabular} & $\diag([5, 0.8, 0.8, 0.3])$ & \begin{tabular}[c]{@{}c@{}}$[0, 0, 0, 0, 0, 0,$ \\
$0, 0, 1,0, 0, 0]$\end{tabular} & $[0.15, 0, 0, 0]$ \\ \bottomrule
\end{tabular}
}
\end{table}

\subsection{Adaptive MSSs}
\label{subsection: Adaptive MSSs}
In this section, we describe in detail the two adaptive MSSs from \cref{section: Experiments}.
\paragraph{Greedy Max Determinant} In the \emph{Greedy Max Determinant} adaptive strategy we solve the following objective:
\begin{align}
\label{eq:max_determinant}
    \max_{S \subset [0, T], \abs{S} = M} \det\left(\mK_S^n\right).
\end{align}
Here, $\mK_S^n \defeq [k_n(\widehat{\vz}_n(t), \widehat{\vz}_n(t'))]_{t, t' \in S}$ and $k_n(\vz, \vz') \defeq k(\vz, \vz') - \vk_n^\top(\vz)\left(\mK_n + \sigma^2\mI\right)^{-1}\vk_n(\vz')$. 
The goal here is to find at every episode $n$ a set $S_n$ of $M$ time points from $[0, T]$.
Solving problem \eqref{eq:max_determinant} is NP-hard and we approximate it by solving it greedily: 
\begin{algorithm}[H]
    \caption{\textsc{Greedy Max Determinant}}
    \label{algorithm:Greedy Max Determinant}
    \begin{algorithmic}[]
        \STATE {\textbf{Init: }}{$S_n= \{\argmax_{t \in [0, T]}k_n(\widehat{\vz}_n(t), \widehat{\vz}_n(t))\}$}
        \FOR{$ k=2, \ldots, M $}{
            \vspace{-0.7cm}
            \STATE {
            \begin{align*}
             S_n = S_n \cup \set{\argmax_{t \in [0, T]}  \det\left(\mK_{S_n \cup \set{t}}^n\right)}
            \end{align*}
            }
            }
        \ENDFOR
    \end{algorithmic}
\end{algorithm}

\paragraph{Greedy Max Kernel Distance} The second adaptive MSS we consider in \Cref{section: Experiments} is the \emph{Greedy Max Kernel Distance}. Here we greedily solve the metric $M$-center problem in space $[0, T]$ with the kernel metric: $d_n(t, t')^2 \defeq k_n(\widehat{\vz}_n(t), \widehat{\vz}_n(t)) + k_n(\widehat{\vz}_n(t'), \widehat{\vz}_n(t')) - 2 k_n(\widehat{\vz}_n(t), \widehat{\vz}_n(t'))$ on the hallucinated trajectory.

\begin{algorithm}[H]
    \caption{\textsc{Greedy Max Kernel Distance}}
    \label{algorithm:Greedy Max Distance}
    \begin{algorithmic}[]
        \STATE {\textbf{Init: }}{$S_n= \{\argmax_{t \in [0, T]}k_n(\widehat{\vz}_n(t), \widehat{\vz}_n(t))\}$}
        \FOR{$ k=2, \ldots, M $}{
            \STATE {
            \vspace{-0.7cm}
            \begin{align*}
             S_n = S_n \cup \set{\argmax_{t \in [0, T]} \min_{t' \in S_n} d_n(t, t')}
            \end{align*}
            }
            }
        \ENDFOR
    \end{algorithmic}
\end{algorithm}

\subsection{Episodes Hyperparameters}
\label{subsection: Episodes Hyperparameters}

\begin{table}[H]
\centering
\caption{\looseness-1 We take a few tens of measurements per episode in each environment. We run the experiments for at most 40 episodes in each environment.}
\label{table:episode_details}
\begin{tabular}{@{}lccc@{}}
\toprule
System           & \begin{tabular}[c]{@{}c@{}}Number of \\ episodes $N$\end{tabular} & \begin{tabular}[c]{@{}c@{}}Number of Measurements \\ per episodes $M$\end{tabular} &  \begin{tabular}[c]{@{}c@{}}Time \\ horizon $T$\end{tabular} \\ \midrule
Cancer Treatment & 20                 & 10       &  20               \\
Glucose in Blood & 20                 & 10       &  0.45               \\
Pendulum         & 20                 & 10       &  10              \\
Mountain Car     & 40                 & 10       &   1            \\
Cart Pole        & 40                 & 10       &   10            \\
Furuta Pendulum  & 40                 & 10       &    10           \\
Bicycle          & 20                 & 10       &   10            \\
Quadrotor 2D     & 20                 & 10       &    8           \\
Quadrotor 3D     & 25                 & 20       &     15          \\ \bottomrule
\end{tabular}
\end{table}

\subsection{Practical Implementation}
\label{subsection:Offline planning and online tracking}
In practice, we approach solving the optimal control problem \eqref{eq:optimal_control} with \emph{offline planning} and \emph{online tracking} strategy. Before the episode starts we obtain an open loop trajectory $\tilde{\vx}_n(t), \tilde{\vu}_n(t)$ by offline planning:
\begin{equation}
\label{eq:open_loop_problem}
\begin{aligned}
    \tilde{\vx}_n(t), \tilde{\vu}_n(t) &= \argmin_{\vx(t), \vu(t)} \min_{\eta(t)} \int_{0}^Tc(\vx(t), \vu(t)) \,dt \\
    \text{s.t. } &\dot{\vx}(t) = \vmu_n(\vx(t), \vu(t)) + \beta_n \vsigma_n(\vx(t), \vu(t))\veta(t) \\
                    & \veta(t) \in [-1, 1]^{d_x}, \quad \forall t \in [0, T].
\end{aligned}
\end{equation}
We solve the optimization problem \eqref{eq:open_loop_problem} with Iterative Linear Quadratic Regulator (ILQR) \citep{li2004iterative}. Then we track the open loop trajectory $\tilde{\vx}_n(t)$ online with MPC:
\begin{equation}
\label{eq:MPC_tracking}
\begin{aligned}
    \vu(t) &= \argmin_{\vu(t), \vx(t)} \int_{s}^{s + T_{MPC}}\left(\norm{\vx(t)- \tilde{\vx}(t)}_2^2 + \norm{\vu(t)- \tilde{\vu}(t)}_2^2\right) \,dt \\
    \text{s.t. } & \dot{\vx}(t) = \vmu_n(\vx(t), \vu(t))
\end{aligned}
\end{equation}
Here, horizon $T_{MPC}$ depends on the system. 
\begin{table}[H]
\centering
\caption{\looseness-1 The MPC horizon for considered systems.}
\begin{tabular}{@{}lccc@{}}
\toprule
System           & $T_{MPC}$ \\ \midrule
Cancer Treatment &   5.0              \\
Glucose in Blood & 0.2              \\
Pendulum         & 6.0                 \\
Mountain Car     & 1.0                 \\
Cart Pole        & 5.0                 \\
Furuta Pendulum  & 5.0                 \\
Bicycle          & 5.0                 \\
Quadrotor 2D     & 3.0                 \\
Quadrotor 3D     & 5.0                 \\ \bottomrule
\end{tabular}
\end{table}
We solve MPC tracking problem \eqref{eq:MPC_tracking} also with the ILQR.
\section{Additional Experiments on Number of Measurements}
\label{appendix: Additional Experiments on Number of Measurements}

\begin{figure}[H]
\centering
\begin{subfigure}[b]{\linewidth}
    \includegraphics[width=\linewidth]{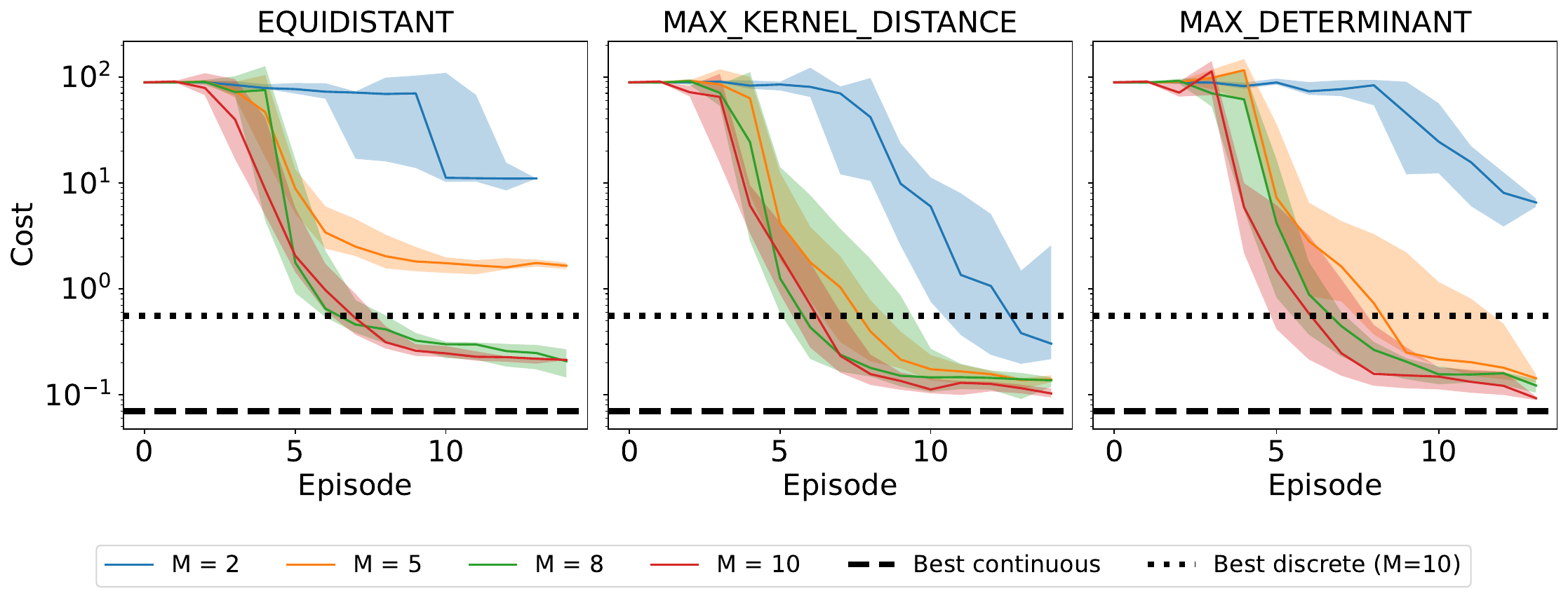}
   \caption{We vary number of measurements we take per episode for the Pendulum system. As expected the more observation results in lower achieved cost for every MSS.}
   \label{fig:varying_num_M} 
\end{subfigure}
\begin{subfigure}[b]{\linewidth}
    \includegraphics[width=\linewidth]{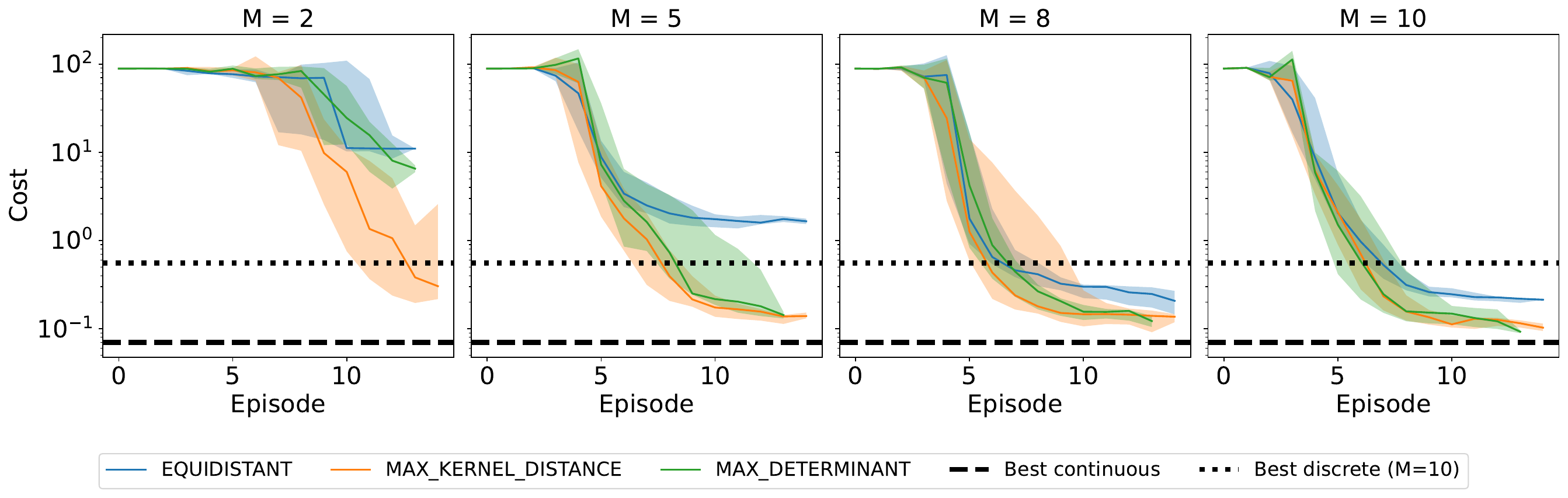}
   \caption{For the same number of measurments we compare different MSSs. We see that equidistant MSS repeatedly suffers higher cost compare to the adaptive Max Kernel Distance and Max Deteriminant MSS.}
   \label{fig:varying_num_M_2}
\end{subfigure}
\end{figure}

\end{document}